\theoremstyle{definition}
\newtheorem{theorem}{Theorem}[section]
\newtheorem{proposition}[theorem]{Proposition}
\newtheorem{corollary}[theorem]{Corollary}
\newtheorem{lemma}[theorem]{Lemma}
\newtheorem{remark}[theorem]{Remark}
\newcommand{\geqm}{\succcurlyeq}
\newcommand{\rb}{\mathbb{R}}
\def \card{ { \rm Card }}
\DeclarePairedDelimiterX{\inner}[2]{\langle}{\rangle}{#1, #2}
\title{Minimum Volume Conformal Sets for Multivariate Regression}
\author[1]{Sacha Braun}
\author[2]{Liviu Aolaritei}
\author[$1, 2$]{Michael I. Jordan}
\author[1]{Francis Bach}
\affil[1]{Sierra team, Inria Paris, France \protect\\
\texttt{\{sacha.braun, francis.bach\}@inria.fr}}
\affil[2]{Department of Electrical Engineering and Computer Sciences, UC Berkeley, USA \protect\\ \texttt{liviu.aolaritei@berkeley.edu, jordan@cs.berkeley.edu}}
\date{}
\begin{document}

\maketitle

%------------------------------------------------------------------------

\begin{abstract}
{\color{black}Conformal prediction provides a principled framework for constructing predictive sets with finite-sample validity. While much of the focus has been on univariate response variables, existing multivariate methods either impose rigid geometric assumptions or rely on flexible but computationally expensive approaches that do not explicitly optimize prediction set volume. We propose an optimization-driven framework based on a novel loss function that directly learns minimum-volume covering sets while ensuring valid coverage. This formulation naturally induces a new nonconformity score for conformal prediction, which adapts to the residual distribution and covariates. Our approach optimizes over prediction sets defined by arbitrary norm balls—including single and multi-norm formulations. Additionally, by jointly optimizing both the predictive model and predictive uncertainty, we obtain prediction sets that are tight, informative, and computationally efficient, as demonstrated in our experiments on real-world datasets.}
\end{abstract}

%------------------------------------------------------------------------

\section{Introduction}
\label{sec:intro}

In predictive modeling, quantifying uncertainty is often as crucial as making accurate predictions. Traditional point estimates provide limited insight into predictive accuracy, whereas prediction sets offer a more robust alternative by identifying regions that contain the true outcome with high probability. Conformal prediction \cite{vovk2005algorithmic, shafer2008tutorial,angelopoulos2024theoretical} provides a model-agnostic framework for constructing such sets with finite-sample validity, ensuring that the true response is captured at least $1 - \alpha$ fraction of the time without requiring strong distributional assumptions.

In the setting of univariate regression, conformal prediction can produce prediction intervals that adapt to heteroskedasticity in the data. Quantile regression, optimized using the pinball loss, is a common approach for learning such intervals \cite{romano2019conformalized}. Extending these ideas to multivariate regression, however, where the response is vector-valued, introduces significant challenges. A straightforward extension—constructing Cartesian products of marginal intervals \cite{neeven2018conformal}—fails to account for dependencies across dimensions, resulting in overly conservative and inefficient prediction sets. Instead, prediction sets must be structured to adapt to the joint distribution of the residuals, balancing validity, efficiency, and flexibility.

More structured alternatives, such as ellipsoidal prediction sets shaped by empirical residual dependencies \cite{johnstone2021conformal, messoudi2022ellipsoidal}, incorporate covariance information but assume elliptical symmetry, limiting their adaptability to more complex distributions. More flexible approaches attempt to model joint dependencies explicitly. Copula-based methods \cite{messoudi2021copula, sun2022copula} and density estimators \cite{izbicki2022cd, wang2023probabilistic} relax geometric constraints but often require accurate joint distribution estimation, which is computationally expensive in high dimensions, {\color{black}and typically leads to high estimation variance}. Other methods, including optimal transport \cite{klein2025multivariate, thurin2025optimal} and quantile region estimation \cite{feldman2023calibrated}, allow for nonconvex and multimodal structures, improving adaptability. However, these approaches typically lack an explicit volume minimization criterion, leading to unnecessarily large sets, and often involve computationally expensive procedures that scale poorly in high dimensions. These methods, along with other approaches, will be reviewed in more detail in Section~\ref{subsec:related:work} on related work.

To address these limitations, we introduce a framework for constructing multivariate conformal prediction sets that minimize volume while maintaining valid coverage. Rather than imposing a fixed structure, our approach learns the optimal shape of the prediction set by optimizing over flexible geometric representations, including adaptive norm-based formulations. This optimization extends beyond the prediction set itself, as we jointly learn the predictive model together with the prediction set, ensuring that the predictor is aligned with the minimum-volume criterion. By jointly optimizing over both the predictor and the uncertainty model, we obtain tighter prediction sets that better adapt to the underlying data distribution. Finally, we conformalize the learned sets to achieve finite-sample coverage guarantees. These methodological innovations result in a scalable, data-driven approach that improves efficiency and adaptivity while preserving coverage guarantees.

\paragraph{Contributions.} Specifically, our contributions can be summarized as follows:
\begin{itemize}
    \item[1.] \textbf{Minimum-volume covering sets (MVCS).} We introduce a general optimization-based framework for constructing minimum-volume sets that contain a prescribed fraction of a given dataset in $\mathbb{R}^k$. This extends beyond standard approaches by allowing arbitrary {\color{black}norm balls}, including data-driven norms that adapt to the geometry of the data. We reformulate the problem as a structured nonconvex optimization, providing both a difference-of-convex (DC) formulation and a convex relaxation for efficient computation. By restricting to $p$-norms, with $p \in (0,\infty)$, we further enable automatic selection of the optimal norm, and we extend this framework to multi-norm formulations, yielding highly flexible and adaptive prediction sets. {\color{black}We illustrate this in Figure~\ref{figure:intro}.}

    \item[2.] \textbf{Supervised learning with adaptive prediction sets.} We extend our MVCS framework to supervised learning, where prediction sets are constructed for a learned predictor $f_\theta$ with parameter $\theta \in \Theta$ rather than solely from residuals. Unlike standard conformal methods, which construct prediction sets post hoc, we propose an integrated optimization scheme that jointly learns the predictor, the norm structure, and the transformation function that scales the uncertainty set. {\color{black}This is achieved by introducing a \emph{novel loss function} that learns minimum-volume covering sets (MVCSs) while ensuring valid coverage.} Since the problem is nonconvex, we develop an iterative optimization scheme to jointly optimize over all parameters.

    \item[3.] \textbf{Conformalized minimum-volume prediction sets.} To ensure valid finite-sample coverage, we integrate our framework with conformal prediction, leveraging a separate calibration set to rescale the learned minimum-volume sets. This approach preserves the adaptive shape of the sets while ensuring rigorous coverage guarantees. The conformalization procedure is computationally efficient and applies seamlessly to any of our formulations—fixed norm, single learned $p$-norm, or multiple norms—making the method broadly applicable.
\end{itemize}

By bridging conformal prediction and volume optimization, our method provides a principled, data-driven framework for constructing valid, adaptive, and minimal-volume prediction sets in multivariate regression.

\begin{figure}[t]
    \center
    \subfigure{\includegraphics[width=\columnwidth]{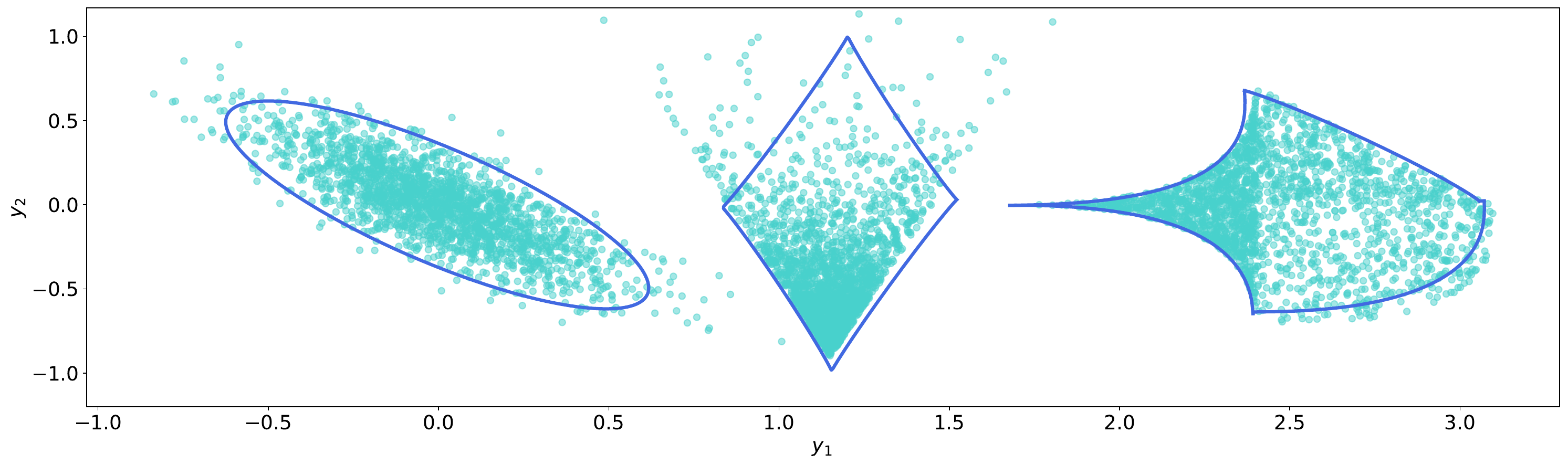}}
    \vspace{-10mm}
    \caption{MVCSs obtained using the first-order optimization strategy described in Section~\ref{subsec:p:norms} with coverage level $0.95$. The left figure is a single-norm MVCS fit with $p = 1.98$ for Gaussian data, the middle one is a single-norm MVCS fit with $p = 0.94$ for exponential data, while the right one is a multi-norm MVCS optimized over four different norms, yielding region-specific values of $p = 0.50, 0.49, 2.52, 1.11$, for a combination of a uniform distribution over three unit balls with norms $0.5$, $3$ and $1$.}
    \label{figure:intro}
\end{figure}

%------------------------------------------------------------------------

\subsection{Related work}
\label{subsec:related:work}

Several approaches have been proposed to construct valid prediction sets while balancing flexibility and efficiency \cite{dheur2025multi}. We review key contributions in this area, highlighting their strengths and limitations.

\medskip

\noindent\emph{Hyperrectangles}.
One of the earliest approaches constructs prediction sets as Cartesian products of marginal intervals, resulting in hyperrectangles \cite{neeven2018conformal}. While straightforward and computationally efficient, this method fails to capture dependencies among response variables, often leading to overly conservative prediction regions.

\medskip

\noindent\emph{Ellipsoidal prediction sets}.
Ellipsoidal prediction sets provide a more structured alternative, using empirical covariance matrices to shape prediction regions. The work \cite{johnstone2021conformal} introduced a method that constructs a single, global ellipsoidal region using the sample covariance matrix. Building on this, \cite{messoudi2022ellipsoidal} incorporated k-nearest neighbors to estimate local covariance, blending it with the global covariance matrix to improve adaptability to local structure. Finally, \cite{henderson2024adaptive} further extended these methods by developing a rigorous mathematical framework, establishing theoretical guarantees, and leveraging empirical covariance matrices for calibration and efficiency. However, while these approaches can yield tighter prediction sets in some cases, they remain restricted to convex shapes and assume an underlying elliptical structure, limiting their flexibility in capturing more complex distributions.

\medskip

\noindent\emph{Convex templates}.
To address multimodality, \cite{tumu2024multi} introduced a clustering-based method that fits separate convex templates—such as convex hulls, hyperrectangles, and ellipsoids—to each cluster. This improves flexibility in capturing diverse residual distributions while still maintaining convexity. {\color{black}Additionally, although not directly focused on prediction sets, related work has studied minimum-volume enclosing polytopes \cite{panigrahy2004minimum,saha2011new}.}

\medskip

\noindent\emph{Copula-based approaches}.
Rather than relying on fixed geometric assumptions, copula-based methods explicitly model dependencies in the response distribution. The works \cite{messoudi2021copula} and \cite{sun2022copula} leveraged copulas to construct (joint) prediction sets, offering a more data-driven approach that avoids strong parametric assumptions. These methods are inherently adaptive, as they adjust prediction regions based on the estimated dependence structure of the data. However, their effectiveness depends on accurate copula estimation, which can be challenging in high dimensions.

\medskip

\noindent\emph{Density-based and sampling-based approaches}.
Another approach to constructing prediction sets involves directly estimating the conditional distribution of the response. In this direction, \cite{izbicki2022cd} used conditional density estimation to derive predictive sets, leveraging density estimators to approximate the distribution of residuals. Alternatively, \cite{wang2023probabilistic} employed generative models to sample from the predictive distribution, capturing complex dependencies that density estimation alone might not fully capture. Moreover, \cite{plassier2024probabilistic} introduced a hybrid framework that combines both density estimation and generative modeling, improving the flexibility of predictive set construction, particularly in cases with high heteroskedasticity. {\color{black}A related line of work considers density-based quantile methods, where quantile regions are estimated using conditional density level sets \cite{camehl2024superlevel}.}

\medskip

\noindent\emph{Latent-space quantile methods}.
Nonconvex prediction sets can be constructed by mapping conditional distributions into a latent space where level sets remain convex, then transforming them back into the original space. This approach, proposed by \cite{feldman2023calibrated}, combines directional quantile regression with conditional variational autoencoders to effectively capture complex distributional features.

\medskip

\noindent\emph{Optimal transport-based methods}.
Optimal transport has been used to construct multivariate conformal prediction sets by defining meaningful rankings in multidimensional spaces. In this direction, \cite{thurin2025optimal} built upon the developments in \cite{chernozhukov2017monge, hallin2021distribution} to map multivariate residuals onto a uniform distribution using transport maps, enabling a structured approach to defining multivariate nonconformity scores. Concomitantly, \cite{klein2025multivariate} further improved computational efficiency by leveraging differentiable transport maps and entropic regularization techniques, facilitating scalable nonconformity score computations. While these methods provide a principled approach to multivariate prediction regions, they require solving transport optimization problems, which are computationally demanding in high dimensions. {\color{black}Finally, transport-based quantile methods, such as vector quantile regression \cite{carlier2016vector} and center-outward quantiles \cite{del2024nonparametric}, leverage optimal transport maps to define multivariate quantile regions. While these methods do not construct conformal prediction sets, they provide an alternative geometric approach to ordering multivariate data and characterizing uncertainty.}

\medskip

\noindent\emph{Extensions to functional and distributional data}.
Conformal prediction has also been extended beyond finite-dimensional regression to handle more complex data structures. One major direction is functional data, where \cite{lei2015conformal} introduced a conformal framework, later refined by \cite{diquigiovanni2021conformal}, to improve adaptivity for structured functional domains. In the context of distributional regression, \cite{chernozhukov2021distributional} developed conformal methods that provide coverage guarantees in probability spaces. More broadly, \cite{kuchibhotla2020exchangeability} formulated a general conformal prediction framework for arbitrary metric spaces, expanding its applicability to a wide range of structured data types.

\medskip

\noindent\emph{Volume-minimizing methods}.
Among prior works, \cite{henderson2024adaptive} and \cite{tumu2024multi} are the most closely related to our method, as both focus explicitly on \emph{minimizing the volume of the prediction set}. However, \cite{tumu2024multi} restricts prediction regions to convex shapes and employs heuristic clustering algorithms to adaptively partition the data, while \cite{henderson2024adaptive} optimizes volume under the assumption of elliptical distributions, relying on empirical covariance estimation. {\color{black}Additionally, although not directly focused on prediction sets, a closely related line of research has investigated the problem of finding the minimum-volume enclosing ellipsoid for a given set of points \cite{sun2004computation,todd2016minimum,todd2007khachiyan,tao2017global,bowman2023computing,rosa2022computing,kumar2005minimum,harris2024efficient,mittal2022finding}.} Our method generalizes these approaches by optimizing prediction regions over arbitrary norms, removing restrictive convexity assumptions while maintaining exact finite-sample coverage.

\medskip

\noindent\emph{Local adaptivity}.
Beyond their structural differences, some of these methods are designed to be adaptive, adjusting prediction sets based on the underlying data distribution \cite{thurin2025optimal, messoudi2022ellipsoidal, diquigiovanni2021conformal, chernozhukov2021distributional, kuleshov2018conformal, messoudi2020conformal, messoudi2021copula, kuchibhotla2020exchangeability, izbicki2022cd, wang2023probabilistic, plassier2024conditionally, tumu2024multi, feldman2023calibrated, dheur2025multi}, while others provide theoretical guarantees of asymptotic conditional coverage \cite{thurin2025optimal, chernozhukov2021distributional, izbicki2022cd, plassier2024conditionally}. Our method belongs to the class of adaptive approaches.

%------------------------------------------------------------------------

\subsection{Preliminaries on conformal prediction}
\label{subsec:preliminaries:CP}

We focus on \emph{split conformal prediction}, one of the most widely used methods for constructing valid prediction sets due to its simplicity and computational efficiency \cite{papadopoulos2002inductive, lei2018distribution}. Given a dataset $\mathcal{D} = \{(X_i, Y_i)\}_{i=1}^{N} \sim \mathbb P$, where each $X_i \in \mathcal{X} \subseteq \mathbb{R}^d$ represents a covariate vector, with $d$ components that we refer to as \emph{features}, and each $Y_i \in \mathcal{Y} \subseteq \mathbb{R}^k$ represents a multivariate response, the goal is to construct a prediction region $C(X)$ for a new test covariate $X$ such that it contains the true response $Y$ with probability at least $1 - \alpha$.

To achieve this, a predictive model $f: \mathcal{X} \to \mathcal{Y}$ is first fitted using a portion of the data, referred to as the \emph{training set}. The remaining $n$ data points, known as the \emph{calibration set}, are then used to quantify the uncertainty of the model’s predictions by adjusting the prediction sets to achieve valid coverage. Specifically, conformal prediction constructs a prediction region that satisfies the finite-sample coverage guarantee
\begin{align}
\label{eq:basic:CP:guarantee}
    \mathrm{Prob} \left\{ Y_{n+1} \in C(X_{n+1}) \right\} \geq 1 - \alpha.
\end{align}
The probability is taken over both the randomness in the test point $(X_{n+1}, Y_{n+1})$ and the calibration set, under the assumption that all data points are exchangeable but with no further distributional assumptions. 

A key component in achieving this guarantee is the choice of a \emph{nonconformity score} function $s: \mathcal{X} \times \mathcal{Y} \to \mathbb{R}$, which measures how atypical an observed response $Y$ is relative to the model’s prediction for a given covariate vector $X$. A standard choice in multivariate regression is the norm of the residual $s(X, Y) = \| f(X) - Y \|$, where $\| \cdot \|$ is typically the Euclidean norm, though other norms can be considered to better capture the structure of the residuals.

Once nonconformity scores are computed for all calibration points, a threshold is determined by estimating the empirical $(1-\alpha)$-quantile of the empirical distribution of these scores, adjusted with a \emph{finite-sample correction}:
\begin{align*}
    \widehat{q}_\alpha = \mathrm{Quantile}\left(\frac{\lceil (1-\alpha)(n+1) \rceil}{n}; \frac{1}{n}\sum_{i=1}^n \delta_{s(X_i, Y_i)} \right).
\end{align*}
This threshold defines the final prediction region, which for a test covariate $X_{n+1}$ is given by:
\begin{align*}
    C(X_{n+1}) = \left\{ y \in \mathcal{Y} \mid s(X_{n+1}, y) \leq \widehat{q}_\alpha \right\}.
\end{align*}
By construction, this region satisfies the finite-sample coverage guarantee in \eqref{eq:basic:CP:guarantee}, as the threshold is determined by a rank statistic: the empirical quantile is chosen so that at least $\lceil (1-\alpha)(n+1) \rceil$ calibration points have nonconformity scores below it. Since the test point is exchangeable with the calibration points, its rank among them follows a uniform distribution, ensuring valid coverage.

One of the main advantages of split conformal prediction is its efficiency: since the predictive model is trained only once, prediction regions can be rapidly computed for new test points. However, a key limitation is that a portion of the dataset must be allocated for calibration rather than model training, which may lead to a slight reduction in predictive performance compared to methods that utilize the full dataset. Despite this, its ease of implementation, broad applicability, and strong theoretical guarantees make it a widely adopted tool for uncertainty quantification in regression.

\paragraph{Notation}
Given a set $\{a_i\}_{i=1}^n \subset \rb$, we denote by $\sigma_r\{a_i\}$ the $r$-th largest element of the set, and by $\overline{\sigma}_r\{z_i\}$ the average of the $r$ largest values of the set, i.e., $\overline{\sigma}_r\{z_i\} := \frac{1}{r} \sum_{j=1}^r \sigma_j \{z_i\}$. We denote by $\mathds{1}\{x \in A\}$ the indicator function, equal to $1$ if $x \in A$ and to $0$ if $x \notin A$, for some set $A$. We denote by $\mathrm{SO}(k)$ the group of rotation matrices in $\mathbb{R}^{k \times k}$, given by $\mathrm{SO}(k) \coloneqq \{R \in \mathbb{R}^{k \times k} \mid R^\top R = I, \; \det(R) = 1\}$. Moreover, we denote by $\mathrm{Diag}(k)$ the space of diagonal matrices in $\mathbb R^{k \times k}$. We assume that if the objective function of a minimization (maximization) problem is expressed as the difference of two terms, both evaluating to $\infty$, then the objective function value is interpreted as $\infty$ ($-\infty$). This aligns with the extended arithmetic rules in \cite{rockafellar2009variational}.

%------------------------------------------------------------------------
%------------------------------------------------------------------------
%------------------------------------------------------------------------
%------------------------------------------------------------------------

\section{Minimum-Volume Covering Set}
\label{sec:min:vol:cov:shape}

In this section, we consider the general problem of finding the minimum-volume set that contains a fraction $1-\alpha$ of a given set of points $\{y_1, \dots, y_n\}$ in $\mathbb{R}^k$, for some $\alpha \in (0,1)$. In Section~\ref{subsec:application:regression}, we show how this formulation extends naturally to regression, where we observe pairs $\{(x_i, y_i)\}_{i=1}^n$ and construct adaptive prediction regions that account for multivariate uncertainty. To define these sets, we consider arbitrary norms $\|\cdot\|$ on $\mathbb{R}^k$. Formally, given $\|\cdot\|$, we define:
\begin{align}
\label{eq:ball}
    \mathbb{B}(\|\cdot\|, M, \mu) := \{ y \in \mathbb{R}^k \mid \|M (y - \mu)\| \leq 1 \}.
\end{align}
Here, $\mu \in \mathbb{R}^k$ specifies the center, while $M \in \mathbb{R}^{k \times k}$, with $M \geqm 0$, determines its shape. These sets serve as our fundamental prediction regions, and we aim to optimize $M$ and $\mu$ to minimize their volume.

A notable case arises when $\|\cdot\|$ is the $p$-norm, denoted $\|\cdot\|_p$, for some $p \in (0, \infty)$. In this setting, we use the notation $\mathbb{B}(p, M, \mu)$. Beyond optimizing $M$ and $\mu$, we also learn $p$, allowing the norm structure to be selected adaptively based on the data. More generally, combining multiple $p$-norms allows for asymmetric prediction sets, overcoming the symmetry constraints of individual norms. This enables better adaptation to the geometry of the data while ensuring the required $1-\alpha$ coverage. A formal definition of this multi-norm formulation is presented in Section~\ref{subsec:p:norms}.

In the rest of this section, we first establish a general framework for computing the minimum-volume covering set by optimizing $M$ and $\mu$ for a given norm, as introduced in Section~\ref{subsec:arbitrary:norms}. Since this problem is inherently nonconvex, we propose a difference-of-convex (DC) reformulation and a convex relaxation to enable efficient computation. We then focus on $p$-norms in Section~\ref{subsec:p:norms}, jointly optimizing over $M$, $\mu$, and $p \in (0, \infty)$, allowing the optimization process to automatically select the most suitable $p$\textcolor{black}{, leveraging gradient-based methods for nonconvex optimization}. Moreover, within the same section, we extend this framework to multiple norms, enabling the construction of more flexible, asymmetric prediction sets. Finally, in Section~\ref{subsec:application:regression} we employ this framework in the context of supervised learning.

%------------------------------------------------------------------------

\subsection{Arbitrary norm-based sets}
\label{subsec:arbitrary:norms}

Given a norm $\|\cdot\|$ on $\mathbb{R}^k$, our objective is to find the minimum-volume norm-based set that satisfies a prescribed coverage constraint. This leads to the optimization problem:
\begin{align}
\label{eq:min:vol}
\begin{split}
    \min \quad & \textrm{Vol}(\mathbb{B}(\|\cdot\|, M, \mu)) \\
    \mathrm{s.t.} \quad & M \geqm 0, \; \mu \in \mathbb{R}^k, \\
    & \mathrm{Card} \left\{ i \in [n] \mid \|M (y_i - \mu)\| \leq 1 \right\} \geq n - r + 1.
\end{split}
\end{align}
Here, the constraint ensures that the set contains at least $n - r + 1$ of the given points $\{ y_1, \dots, y_n \}$. The quantity $r$ is chosen such that this corresponds to a fraction $1 - \alpha$ of the total points. The volume of a norm-based set is given by
\begin{align}
\label{eq:volume}
    \textrm{Vol}(\mathbb{B}(\|\cdot\|, M, \mu)) = \lambda(B_{\|\cdot\|}(1)) \cdot \det (M)^{-1},
\end{align}
where $\mathbb B_{\|\cdot\|}(1) = \{ y \in \mathbb{R}^k \mid \|y\| \leq 1 \}$ is the unit ball under $\|\cdot\|$, and $\lambda$ denotes the Lebesgue measure in $\mathbb{R}^k$. Since we do not optimize over the norm $\|\cdot\|$ in this section, the term $\lambda(B_{\|\cdot\|}(1))$ remains fixed, and we can simplify the objective function by minimizing $-\log \det(M)$. This reformulation preserves the essential structure of the problem while removing unnecessary constants. We will return to the full volume formulation in Section~\ref{subsec:p:norms}, where we optimize over the norm, making the dependence on $\lambda(B_{\|\cdot\|}(1))$ relevant. Under these considerations, the problem reduces to
\begin{align}
\label{eq:min:vol:problem}
\begin{split}
    \min \quad & -\log \det(M) \\
    \mathrm{s.t.} \quad & M \geqm 0, \; \mu \in \mathbb{R}^k, \\
    & \mathrm{Card} \left\{ i \in [n] \mid \|M (y_i - \mu)\| \leq 1 \right\} \geq n - r + 1.
\end{split}
\end{align}

Throughout this section, we refer to problem~\eqref{eq:min:vol:problem} as a \emph{minimum-volume covering set} (MVCS) problem. While the objective function $-\log \det(M)$ is convex for $M \geqm 0$, the problem remains nonconvex due to the cardinality constraint, which introduces a combinatorial structure to the feasible set. This problem is known to be NP hard (see \cite[Proposition~1]{ahmadi2014robust}). The following proposition, {\color{black}based on a homogeneity argument}, provides an exact reformulation that enables both convex relaxations and exact reformulations via difference-of-convex (DC) programming. \textcolor{black}{In Section~\ref{subsec:p:norms} we also formulate this loss in a way that allows the use of first-order optimization strategies.}

\begin{proposition}[Reformulation of the MVCS problem]
\label{prop:min:vol:exact}
The MVCS problem~\eqref{eq:min:vol:problem} has the same optimal value as the following optimization problem:
\begin{align}
\label{eq:min:vol:exact}
\begin{split}
    \min \quad & - \log \det(\Lambda) + \sigma_r\left\{ \|\Lambda y_i + \eta \| \right\} \\
    \mathrm{s.t.} \quad & \Lambda \geqm 0,\; \eta \in \rb^k,
\end{split}
\end{align}
where $\sigma_r\left\{ \|\Lambda y_i + \eta \| \right\}$ denotes the $r$-th largest value among $\{\|\Lambda y_i + \eta \|\}_{i=1}^n$. 
Moreover, the optimal pair $(M^\star,\mu^\star)$ in \eqref{eq:min:vol:problem} can be recovered from the optimal solution $(\Lambda^\star,\eta^\star)$ of \eqref{eq:min:vol:exact} as
\begin{align*}
    M^\star = \sigma_r\left\{ \|\Lambda^\star y_i + \eta^\star\|_p \right\}^{-1} \Lambda^\star, \quad
    \mu^\star = -(\Lambda^\star)^{-1} \eta^\star.
\end{align*}
\end{proposition}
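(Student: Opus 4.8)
The plan is to prove the equivalence through a \emph{homogeneity argument}: under a global rescaling $M \mapsto sM$ of the shape matrix, the objective and the coverage constraint of \eqref{eq:min:vol:problem} transform in a controlled way, which lets us trade the combinatorial cardinality constraint for an explicit penalty in the objective and then read off the stated correspondence between optimizers.

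\textbf{Step 1 (rewrite the constraint).} For any $(M,\mu)$, the constraint $\mathrm{Card}\{i \in [n] : \|M(y_i-\mu)\| \le 1\} \ge n-r+1$ holds if and only if $\sigma_r\{\|M(y_i-\mu)\|\} \le 1$, since ``at least $n-r+1$ of the $n$ nonnegative numbers $\|M(y_i-\mu)\|$ do not exceed $1$'' says precisely that the $r$-th largest of them does not exceed $1$. Moreover $-\log\det(M) = +\infty$ when $M$ is singular, so we may restrict to $M \succ 0$ without changing the minimum. Thus \eqref{eq:min:vol:problem} is equivalent to minimizing $-\log\det(M)$ over $M \succ 0$, $\mu \in \mathbb{R}^k$ subject to $\sigma_r\{\|M(y_i-\mu)\|\} \le 1$.

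\textbf{Step 2 (change of variables and the homogeneity reduction).} For $M \succ 0$, the substitution $(\Lambda,\eta) := (M, -M\mu)$ is a bijection onto $\{\Lambda \succ 0\} \times \mathbb{R}^k$, with inverse $M = \Lambda$, $\mu = -\Lambda^{-1}\eta$, and it satisfies $M(y_i - \mu) = \Lambda y_i + \eta$ for all $i$; so the constraint becomes $\sigma_r\{\|\Lambda y_i + \eta\|\} \le 1$ while the objective stays $-\log\det(\Lambda)$. The crux is the scaling identity: for $s > 0$ the constraint function is positively homogeneous of degree one, $\sigma_r\{\|s\Lambda y_i + s\eta\|\} = s\,\sigma_r\{\|\Lambda y_i + \eta\|\}$ (by homogeneity of $\|\cdot\|$ and of order statistics), whereas $-\log\det(s\Lambda) = -\log\det(\Lambda) - k\log s$. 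Write $T := \sigma_r\{\|\Lambda y_i + \eta\|\}$. On the one hand, any feasible $(\Lambda,\eta)$ has $T \le 1$, hence $-\log\det(\Lambda) \ge -\log\det(\Lambda) + k\log T$, and the right-hand side is the objective of \eqref{eq:min:vol:exact} at $(\Lambda,\eta)$; taking infima over the respective domains shows the value of \eqref{eq:min:vol:exact} is at most that of \eqref{eq:min:vol:problem}. On the other hand, for any $(\Lambda,\eta)$ with $\Lambda \succ 0$ and $T > 0$, the rescaled point $(\Lambda/T, \eta/T)$ is feasible for the constrained problem with the constraint tight and has objective $-\log\det(\Lambda) + k\log T$, i.e.\ the objective of \eqref{eq:min:vol:exact} at $(\Lambda,\eta)$; taking infima gives the reverse inequality. (If $T = 0$ is attainable --- equivalently, some value is taken by at least $n-r+1$ of the $y_i$ --- then both problems are unbounded below and equal $-\infty$; this is the only degenerate case, and it is ruled out by the mild non-degeneracy hypothesis discussed below.) Combining the two bounds yields equality of the optimal values.

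\textbf{Step 3 (recovering the optimizers, and the main obstacle).} Given an optimizer $(\Lambda^\star, \eta^\star)$ of \eqref{eq:min:vol:exact}, put $T^\star = \sigma_r\{\|\Lambda^\star y_i + \eta^\star\|\}$ and define $M^\star = (T^\star)^{-1}\Lambda^\star$, $\mu^\star = -(\Lambda^\star)^{-1}\eta^\star$ as in the statement. Using $M^\star(y_i - \mu^\star) = (T^\star)^{-1}(\Lambda^\star y_i + \eta^\star)$, one checks that $\sigma_r\{\|M^\star(y_i - \mu^\star)\|\} = 1$, so $(M^\star,\mu^\star)$ is feasible for \eqref{eq:min:vol:problem}, and that $-\log\det(M^\star) = -\log\det(\Lambda^\star) + k\log T^\star$ equals the common optimal value, so $(M^\star,\mu^\star)$ is optimal. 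The step I expect to require the most care is making the reduction in Step 2 \emph{exact in both directions and ensuring the extrema are attained}: one must check that $(\Lambda,\eta) \mapsto (\Lambda/T, \eta/T)$ is a genuine value-matching surjection between the domain of \eqref{eq:min:vol:exact} and the feasible set of \eqref{eq:min:vol:problem} (so the reformulation is exact, not a mere relaxation or bound), and, for ``optimal value'' and the recovery formulas to be meaningful, that under a mild non-degeneracy assumption on $\{y_i\}$ --- for instance that no hyperplane of $\mathbb{R}^k$ contains at least $n-r+1$ of them --- the relevant sublevel sets of $-\log\det$ are nonempty and compact, so that the minima are attained and $T^\star > 0$. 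Once the scaling is in place, what remains are routine back-substitutions into $-\log\det$ and $\sigma_r$.
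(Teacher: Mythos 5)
Your homogeneity argument is the right idea and your Step~2 is correct as far as it goes, but it proves equivalence with the \emph{wrong} objective, and the step needed to reach the stated formulation~\eqref{eq:min:vol:exact} is missing. Concretely, your two-sided bound in Step~2 shows that the constrained problem~\eqref{eq:min:vol:problem} has the same optimal value as
\begin{equation*}
\min_{\Lambda \geqm 0,\, \eta \in \mathbb{R}^k} \; -\log\det(\Lambda) + k\log \sigma_r\bigl\{ \|\Lambda y_i + \eta\| \bigr\},
\end{equation*}
which is the paper's intermediate formulation~\eqref{eq:min:vol:exact:temp:2}. You then assert that $-\log\det(\Lambda) + k\log T$ ``is the objective of \eqref{eq:min:vol:exact} at $(\Lambda,\eta)$,'' but it is not: the objective of \eqref{eq:min:vol:exact} is $-\log\det(\Lambda) + \sigma_r\{\|\Lambda y_i + \eta\|\}$, with the order statistic entering \emph{linearly}, not through $k\log(\cdot)$. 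Bridging these two requires a second homogeneity argument, which the paper supplies: replace $(\Lambda,\eta)$ by $(\zeta\Lambda,\zeta\eta)$ in \eqref{eq:min:vol:exact}, so the objective becomes $-k\log\zeta - \log\det(\Lambda) + \zeta\,\sigma_r\{\|\Lambda y_i+\eta\|\}$; minimizing explicitly over $\zeta>0$ gives $\zeta^\star = k/\sigma_r\{\|\Lambda y_i+\eta\|\}$, and substituting back yields $-\log\det(\Lambda) + k\log\sigma_r\{\|\Lambda y_i+\eta\|\} + k - k\log k$. This is the step that converts the linear penalty into the logarithmic one, and it is absent from your write-up. Note also that it reveals the two optimal values agree only up to the additive constant $k - k\log k$ (so the proposition's ``same optimal value'' is itself slightly loose); your Step~3 claim that $-\log\det(M^\star) = -\log\det(\Lambda^\star) + k\log T^\star$ ``equals the common optimal value'' of \eqref{eq:min:vol:exact} inherits this discrepancy, since at a minimizer of \eqref{eq:min:vol:exact} one has $T^\star = k$, not $k\log T^\star = T^\star$.

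Everything else in your argument — rewriting the cardinality constraint as $\sigma_r\{\cdot\}\le 1$, the bijection $(\Lambda,\eta)=(M,-M\mu)$, the degree-one homogeneity of $\sigma_r\{\|\cdot\|\}$ versus the $-k\log s$ behavior of $-\log\det$, and the recovery $\mu^\star = -(\Lambda^\star)^{-1}\eta^\star$, $M^\star = \sigma_r\{\|\Lambda^\star y_i + \eta^\star\|\}^{-1}\Lambda^\star$ — matches the paper's route and is sound. To repair the proof, keep Steps~1 and~2 as the reduction to the $k\log\sigma_r$ formulation and then add the $\zeta$-rescaling argument above to pass to \eqref{eq:min:vol:exact}; alternatively, run your two-sided bound directly against the objective $-\log\det(\Lambda) + \sigma_r\{\cdot\}$, in which case the scaling factor you must use is $\zeta = k/T$ rather than $1/T$, and the bound picks up the constant $k - k\log k$.
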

\begin{proof}
Using the change of coordinates $\Lambda \coloneqq \nu M$, for $\nu \geq 0$, and $\eta \coloneqq -\Lambda \mu$, we have that problem~\eqref{eq:min:vol:problem} is equivalent to
\begin{align}
\label{eq:min:vol:exact:temp:1}
\begin{split}
    \min \quad & - \log \det (\Lambda) + k \log \nu \\\;
    \mathrm{s.t.} \quad & \Lambda \geqm 0,\; \mu \in \rb^k, \nu \geq 0 \\
    &\card \left\{ i \in [n] \mid \|\Lambda y_i + \eta\|\leq \nu \right\} \geq n - r + 1.
\end{split}
\end{align}
Now, due the cardinality constraint, for given $\Lambda$ and $\eta$, the optimal $\nu^\star(\Lambda, \eta)$ is equal to the $(n-r+1)$-th smallest value in the set $\left\{\|\Lambda y_i + \eta\|\right\}_{i=1}^n$, or, equivalently, the $r$-th largest value, i.e.,
\begin{align*}
    \nu^\star(\Lambda, \eta) = \sigma_r\left\{ \|\Lambda y_i + \eta\| \right\}.
\end{align*}
Using this, we have that problem \eqref{eq:min:vol:exact:temp:1} is equivalent to the following optimization problem:
\begin{align}
\label{eq:min:vol:exact:temp:2}
\begin{split}
    \min \quad & - \log \det (\Lambda) + k \log\sigma_r\left\{ \|\Lambda y_i + \eta\| \right\} \\
    \mathrm{s.t.} \quad & \Lambda \geqm 0,\; \mu \in \rb^k.
\end{split}
\end{align}
We will now prove that formulation \eqref{eq:min:vol:exact:temp:2} is equivalent to \eqref{eq:min:vol:exact} in the statement of the proposition. To see this, notice that \eqref{eq:min:vol:exact} is equivalent to 
\begin{align*}
\begin{split}
    \min \quad & - \log \det (\zeta \Lambda) + \sigma_r\left\{ \|\zeta( \Lambda y_i + \eta)\| \right\} \\
    \mathrm{s.t.} \quad & \Lambda \geqm 0,\; \eta \in \rb^k, \zeta \geq 0, \\
\end{split}
\end{align*}
whose objective function can be rewritten as $- k \log \zeta - \log \det (\Lambda) +  \zeta \sigma_{r} \left\{ \|\Lambda y_i + \eta\|  \right\}$. Now, since the objective function is convex and differentiable in $\zeta$, we can minimize explicitly over $\zeta$ and obtain
\begin{align*}
    \zeta^\star = \frac{k}{\sigma_{r} \left\{ \|\Lambda y_i + \eta\|  \right\}},
\end{align*}
which is nonnegative, as required. Plugging this back into the objective function, we obtain
\begin{align*}
\begin{split}
    \min \quad &- \log \det (\Lambda) + k  \log \sigma_{p} \left\{ \|\Lambda y_i + \eta\|  \right\}  +k - k \log k
    \\ \mathrm{s.t.} \quad & \Lambda \geqm 0,\; \eta \in \rb^k.
\end{split}
\end{align*}
which is equivalent to \eqref{eq:min:vol:exact:temp:2}. This concludes the proof.
\end{proof}

Following Proposition~\ref{prop:min:vol:exact}, the MVCS is given by
\begin{align*}
    \mathbb{B}(\|\cdot\|, M^\star, \mu^\star) := \{ y \in \mathbb{R}^k \mid \|M^\star (y - \mu^\star)\| \leq 1 \} = \left\{ y \in \mathbb R^k  \mid  \| \Lambda^\star y + \eta^\star \| \leq \sigma_r\left\{ \|\Lambda^\star y_i + \eta^\star\| \right\} \right\}.
\end{align*}
The reformulation~\eqref{eq:min:vol:exact} recasts the original MVCS problem into an optimization framework where volume minimization, captured by the determinant term, is explicitly balanced against the coverage constraint, enforced through $\sigma_r\{\|\Lambda y_i + \eta\|\}$. While the problem remains nonconvex, this structured formulation makes the role of each component explicit, facilitating further analysis and the development of tractable approximations.

The nonconvexity of problem~\eqref{eq:min:vol:exact} arises from the function $\sigma_r$, which selects the $r$-th largest value among the nonconformity scores. However, despite its nonconvex nature, the problem admits a structured decomposition into a \emph{difference-of-convex (DC) form}. Specifically, we express the objective function in problem~\eqref{eq:min:vol:exact} as the difference of two convex functions,
\begin{align}
\label{eq:obj:funct:min:vol}
    \underbrace{- \log \det \Lambda + r \overline{\sigma}_r\left\{ \|\Lambda y_i + \eta \| \right\}}_{f(\Lambda,\eta)} 
    - \underbrace{(r-1) \overline{\sigma}_{r-1}\left\{ \|\Lambda y_i + \eta \| \right\}}_{g(\Lambda,\eta)},
\end{align}
{\color{black}where $\overline{\sigma}_r$ and $\overline{\sigma}_{r-1}$ denote the averages of the $r$ and $r-1$ largest values in the set. A formal proof that $f$ and $g$ are convex functions is provided in Appendix~\ref{app:convexity:DC}.}

A key advantage of the DC reformulation in~\eqref{eq:obj:funct:min:vol} is that it allows for efficient local optimization using the DC algorithm (DCA) \cite{tao1997convex,horst1999dc}. This method iteratively constructs convex approximations of the objective function, solving a convex subproblem at each step rather than optimizing the original nonconvex function directly. This approach ensures scalability while preserving the problem's structure. We briefly outline the DCA approach.

\paragraph{Difference-of-convex algorithm (DCA).}
DCA begins with an initial point $(\Lambda_0, \eta_0) \in \mathrm{dom} \, \partial g$ and iteratively refines the solution by linearizing the concave part of the objective function. At each iteration, the function $g(\Lambda, \eta)$ is linearized at the current iterate $(\Lambda_t, \eta_t)$, yielding a convex majorization of $f(\Lambda, \eta) - g(\Lambda, \eta)$. The next iterate $(\Lambda_{t+1}, \eta_{t+1})$ is then obtained by solving the resulting convex subproblem. Mathematically, given a subgradient $(G_{\Lambda_t}, G_{\eta_t}) \in \partial g(\Lambda_t, \eta_t)$, the update rule is
\begin{align}
\label{eq:DCA}
    (\Lambda_{t+1}, \eta_{t+1}) \in \arg\min \left\{ f(\Lambda, \eta) - \mathrm{tr} \left( G_{\Lambda_t}{}^\top \Lambda \right) - (G_{\eta_t})^\top \eta : \Lambda \geqm 0, \eta \in \mathbb R^k \right\},
\end{align}
where the matrix subgradient $G_{\Lambda_t}$ and the vector subgradient $G_{\eta_t}$ are defined by
\begin{align*}
    G_{\Lambda_t} &\coloneqq \sum_{i=1}^{n} \mathds{1} \left\{ \|\Lambda_t y_i + \eta_t\| \geq \sigma_{r-1}\{\|\Lambda_t y_i + \eta_t\|\} \right\} g_i y_i^\top, \\
G_{\eta_t} &\coloneqq \sum_{i=1}^{n} \mathds{1} \left\{ \|\Lambda_t y_i + \eta_t\| \geq \sigma_{r-1}\{\|\Lambda_t y_i + \eta_t\|\} \right\} g_i,
\end{align*}
and where the vectors $g_i$ belong to the subdifferential of the norm function:
\begin{align*}
    g_i \in \partial \|\Lambda_t y_i + \eta_t\| = \left\{ g \in \mathbb{R}^k \mid \langle g, \Lambda_t y_i + \eta_t \rangle = \|\Lambda_t y_i + \eta_t\|, \; \|g\|_* \leq 1 \right\},
\end{align*}
with $\|\cdot\|_*$ being the dual norm of $\|\cdot\|$. This definition ensures that $ g_i $ acts as a supporting hyperplane at the point $ \Lambda_t y_i + \eta_t $ with respect to the given norm structure. If the norm is differentiable (which holds for $ p > 1 $ and $ x \neq 0 $), the subgradient simplifies to:
\begin{align*}
    (g_i)_j = \frac{|(\Lambda_t y_i + \eta_t)_j|^{p-1} \operatorname{sgn}((\Lambda_t y_i + \eta_t)_j)}{\|\Lambda_t y_i + \eta_t\|_p^{p-1}}, \quad \text{for all } j = 1, \dots, k, \; \text{and for } p > 1.
\end{align*}
In the special case of $ p=2 $, this reduces to $g_i = {\Lambda_t y_i + \eta_t}/{\|\Lambda_t y_i + \eta_t\|_2}$. {\color{black}In practice, problem~\eqref{eq:DCA} can be solved efficiently using \texttt{cvxpy} \cite{diamond2016cvxpy} with the Mosek solver \cite{mosek}}.

A key property of the DCA is that the sequence of objective values $ f(\Lambda_t, \eta_t) - g(\Lambda_t, \eta_t) $ is \emph{nonincreasing and convergent} (see, e.g., \cite[Lemma~1]{niu2022convergence}). However, since problem~\eqref{eq:min:vol:exact} is nonconvex, DCA does not guarantee global optimality. The algorithm may converge to a local minimum, meaning that the resulting prediction set may not have the smallest possible volume.

\begin{figure}[t]
    \center
    \subfigure{\includegraphics[width=0.48\columnwidth]{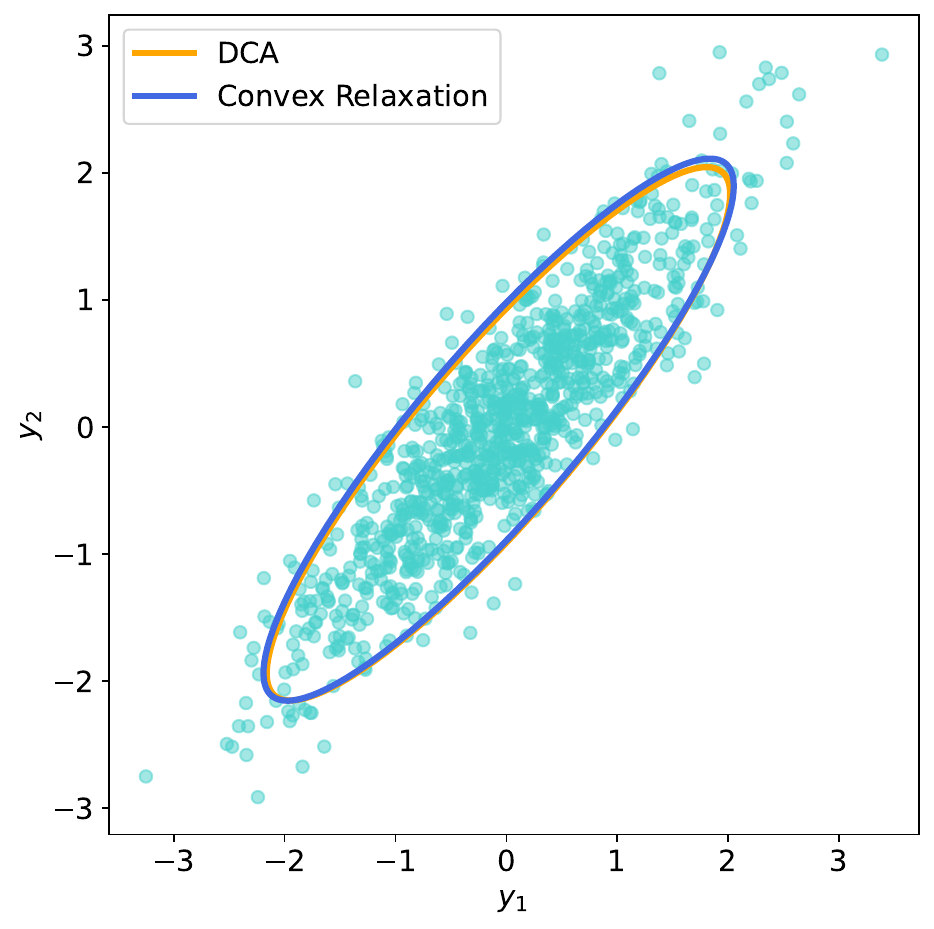}} ~
    \subfigure{\includegraphics[width=0.48\columnwidth]{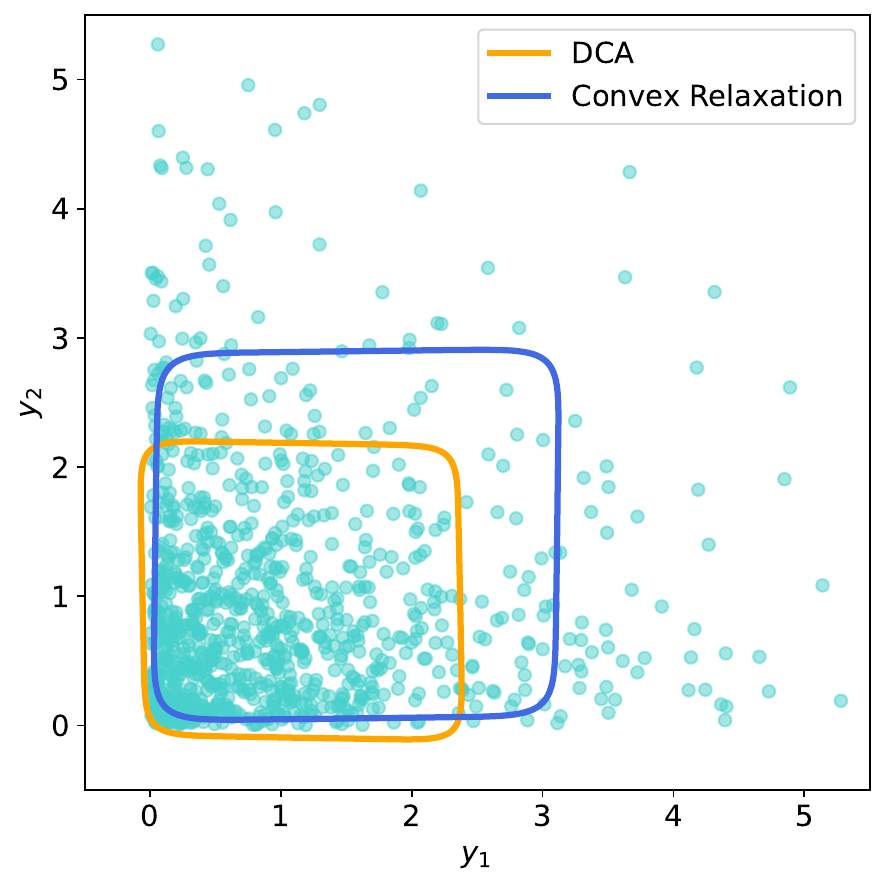}} 
    \vspace{-5mm}
    \caption{Comparison of the MVCSs obtained via the convex relaxation problem~\eqref{eq:min:vol:convex} and the DCA method~\eqref{eq:DCA}. The solid lines represent the boundaries of $\mathbb{B}(p, M, \mu)$, with dots corresponding to i.i.d.\ samples from the underlying distribution. (Left) Gaussian distribution: $Y\sim \mathcal{N}\left(0, \left({\begin{smallmatrix} 1 & 0.9 \\ 0.9 & 1 \end{smallmatrix}}\right) \right)$ with coverage level $0.90$ and $p = 2$. Both methods produce similar sets, aligning well with the empirical covariance structure. (Right) Exponential distribution: $Y\sim \mathcal{E}(1)^{\otimes 2}$ with coverage level $0.80$ and $p = 10$. While the convex relaxation results in a more elongated set influenced by extreme values, the DCA solution better captures the dense central region of the distribution.}
    \label{figure:DC:CR}
\end{figure}

\paragraph{Convex relaxation.}
Instead of solving the nonconvex DC formulation via the DCA algorithm~\eqref{eq:DCA}, an alternative approach is to consider a convex relaxation. This is obtained by dropping the term $(r-1) \overline{\sigma}_{r-1}\left\{ \|\Lambda y_i + \eta \| \right\}$ from the objective function in \eqref{eq:obj:funct:min:vol}, leading to the convex optimization problem
\begin{align} 
\label{eq:min:vol:convex} 
\begin{split}
    \min \quad & - \log \det (\Lambda) + \sum_{i=1}^n \max\{ \|\Lambda y_i + \eta\| - \nu, 0\} + r \nu \\ 
    \mathrm{s.t.} \quad & \Lambda \geqm 0, \; \eta \in \mathbb{R}^k, \; \nu \in \mathbb R,
\end{split} 
\end{align}
where $\nu$ is an auxiliary variable introduced by reformulating $\overline{\sigma}_r\left\{ \|\Lambda y_i + \eta \| \right\}$ as a minimization problem (see Appendix~\ref{app:convexity:DC} for details). This formulation can be efficiently solved using standard off-the-shelf solvers. However, since the term $\overline{\sigma}_{r-1}$ played a crucial role in balancing coverage constraints with volume minimization, the relaxed formulation may lead to larger prediction sets than the exact DC formulation. Nevertheless, in scenarios where computational efficiency is prioritized, this convex relaxation provides a scalable alternative.

\begin{remark}[MVCS for the DCA and convex relaxation problem]
Let $(\Lambda^\star,\eta^\star)$ denote the final iterate of the DCA method~\eqref{eq:DCA} or the global solution of the convex relaxation~\eqref{eq:min:vol:convex}. The corresponding MVCS set, which contains $n - r + 1$ points, is given by $\{ y \in \mathbb R^k  \mid  \| \Lambda^\star y + \eta^\star \| \leq \sigma_r\left\{ \|\Lambda^\star y_i + \eta^\star\| \right\} \}$. We illustrate in Figure~\ref{figure:DC:CR} the solutions obtained by both methods for different distributions. In the Gaussian case (left), the solutions are nearly identical, with both approaches aligning with the empirical covariance structure. However, for an asymmetric distribution (right), the convex relaxation method is more influenced by the tails, stretching the prediction set outward, whereas the DCA solution remains concentrated around the high-density region near the origin. These observations suggest that in elliptical distributions, the convex relaxation method performs well, but for non-elliptical distributions, the DCA method provides tighter, more representative sets.
\end{remark}

{\color{black}\begin{remark}[Connection to pinball loss]
A key insight emerges when taking the derivative of the objective function in~\eqref{eq:min:vol:convex} with respect to $\nu$: at optimality, $\nu^\star$ is the $(n - r + 1)$-th order statistic of the nonconformity scores, enforcing the required coverage constraint. This mechanism closely parallels the structure of quantile regression \cite{koenker2005quantile}, where the pinball loss implicitly selects a quantile level by balancing over- and under-estimation errors. A similar argument appears in \cite[Lemma 2.2.1]{roth2022uncertain}, which establishes that minimizing the pinball loss aligns with quantile estimation through a first-order optimality condition.
\end{remark}}

%------------------------------------------------------------------------

\subsection{Learning $p$-norm prediction sets}
\label{subsec:p:norms}

Thus far, we have considered the problem of finding the minimum-volume norm-based set while assuming that the norm structure is fixed. We now extend this approach by allowing the norm itself to be optimized. To accomplish this, we restrict our attention to the family of $p$-norms, parameterized by $p \in (0, \infty)$. This introduces an additional degree of flexibility, enabling the prediction set to better conform to the geometry of the data. Recalling \eqref{eq:min:vol}, the MVCS problem becomes
\begin{align}
\label{eq:min:vol:problem:p}
\begin{split}
    \min \quad & \textrm{Vol}(\mathbb{B}(p, M, \mu)) \\
    \mathrm{s.t.} \quad & M \geqm 0, \; \mu \in \mathbb{R}^k, \; p > 0, \\
    & \mathrm{Card} \left\{ i \in [n] \mid \|M (y_i - \mu)\|_p \leq 1 \right\} \geq n - r + 1.
\end{split}
\end{align}
Here, the volume of the $p$-norm-based set is still given by $\textrm{Vol}(\mathbb{B}(p, M, \mu)) = \lambda(B_{\|\cdot\|_p}(1)) \cdot \det (M)^{-1}$, where $\lambda(B_{\|\cdot\|_p}(1))$ is the Lebesgue measure of the unit $p$-norm ball. Its explicit formula is given by \cite{dirichlet1839sue},
\begin{align}
\label{eq:Lebesgue:p:norm}
    \lambda(B_{\|\cdot\|_p}(1)) = \frac{2^k\, \Gamma(1 + 1/p)^k}{\Gamma(1 + k/p)},
\end{align}
where $\Gamma(\cdot)$ denotes the gamma function \cite{artin2015gamma}. For brevity, we will retain the notation $\lambda(B_{\|\cdot\|_p}(1))$ throughout the remainder of the paper. Building on the reasoning in Proposition~\ref{prop:min:vol:exact}, we derive the following corollary, which provides a more structured and tractable reformulation of the MVCS problem~\eqref{eq:min:vol:problem:p}.

\begin{corollary}[Single-norm MVCS]
\label{cor:min:vol:exact:p}
The MVCS problem~\eqref{eq:min:vol:problem:p} has the same optimal value as the following optimization problem:
\begin{align}
\label{eq:min:vol:exact:p}
\begin{split}
    \min \quad & - \log \det (\Lambda) + k \log \sigma_r\left\{ \|\Lambda (y_i + \mu) \|_p \right\} + \log \lambda(B_{\|\cdot \|_p}(1)) \\
    \mathrm{s.t.} \quad & \Lambda \geqm 0,\; \mu \in \mathbb{R}^k,\; p > 0,
\end{split}
\end{align}
and the optimal $M^\star$ in \eqref{eq:min:vol:problem:p} can be recovered from the optimal solution $\Lambda^\star$ of \eqref{eq:min:vol:exact:p} as
\begin{align*}
    M^\star = \sigma_r\left\{ \|\Lambda^\star (y_i + \mu^\star)\|_{p^\star} \right\}^{-1} \Lambda^\star.
\end{align*}
\end{corollary}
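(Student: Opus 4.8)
The plan is to mirror the homogeneity argument used in the proof of Proposition~\ref{prop:min:vol:exact}, the only new ingredient being that the volume prefactor $\lambda(B_{\|\cdot\|_p}(1))$ now varies with $p$ and so must be carried through the reduction. First I would replace the objective of \eqref{eq:min:vol:problem:p} by its logarithm: since $M \geqm 0$ forces $\det(M) > 0$, minimizing $\textrm{Vol}(\mathbb{B}(p,M,\mu)) = \lambda(B_{\|\cdot\|_p}(1))\det(M)^{-1}$ is equivalent to minimizing $\log\lambda(B_{\|\cdot\|_p}(1)) - \log\det(M)$. I would note here that the volume identity \eqref{eq:volume} is nothing but the change-of-variables formula for the Lebesgue measure under the affine bijection $z \mapsto M^{-1}z + \mu$, so it holds for every $p \in (0,\infty)$, including the nonconvex range $p < 1$ where $\|\cdot\|_p$ is not a genuine norm but is still positively homogeneous of degree one; the explicit value is \eqref{eq:Lebesgue:p:norm}.

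Next I would introduce a scalar $\nu \ge 0$ and perform the substitution $\Lambda \coloneqq \nu M$, leaving $\mu$ and $p$ untouched, exactly as in Proposition~\ref{prop:min:vol:exact}. By positive homogeneity of $\|\cdot\|_p$, the membership condition $\|M(y_i - \mu)\|_p \le 1$ becomes $\|\Lambda(y_i - \mu)\|_p \le \nu$, and $-\log\det(M) = -\log\det(\Lambda) + k\log\nu$; hence \eqref{eq:min:vol:problem:p} is equivalent to minimizing $\log\lambda(B_{\|\cdot\|_p}(1)) - \log\det(\Lambda) + k\log\nu$ over $\Lambda \geqm 0$, $\mu \in \mathbb{R}^k$, $p > 0$, $\nu \ge 0$, subject to $\mathrm{Card}\{\, i \in [n] : \|\Lambda(y_i-\mu)\|_p \le \nu \,\} \ge n-r+1$. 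For fixed $(\Lambda,\mu,p)$ the objective is increasing in $\nu$, while the cardinality constraint forces $\nu$ to be at least the $(n-r+1)$-th smallest --- equivalently the $r$-th largest --- element of $\{\|\Lambda(y_i-\mu)\|_p\}_{i=1}^n$; so the partial minimization over $\nu$ yields $\nu^\star(\Lambda,\mu,p) = \sigma_r\{\|\Lambda(y_i-\mu)\|_p\}$. Substituting this value back reproduces exactly problem \eqref{eq:min:vol:exact:p}, and the recovery formula follows by reading off $M^\star = (\nu^\star)^{-1}\Lambda^\star = \sigma_r\{\|\Lambda^\star(y_i - \mu^\star)\|_{p^\star}\}^{-1}\Lambda^\star$, while $\mu^\star$ and $p^\star$ need no separate recovery since they are shared by both formulations.

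The steps are routine, so the main (minor) obstacle I would flag is simply that $\log\lambda(B_{\|\cdot\|_p}(1))$ is no longer a constant: one must check that it plays no role in the scaling manipulations, which is immediate because it depends only on $p$ while the substitution $\Lambda = \nu M$ and the minimization over $\nu$ are both carried out at fixed $p$. I would also point out a contrast with Proposition~\ref{prop:min:vol:exact}: the further reparametrization there (introducing an auxiliary scale $\zeta$) that converts a plain $\sigma_r$ term into a $k\log\sigma_r$ term becomes vacuous once the $p$-dependent volume factor is present, since $\zeta$ then cancels out of the objective identically; thus the $k\log\sigma_r$ form displayed in \eqref{eq:min:vol:exact:p} is the natural endpoint and no simplification to a plain $\sigma_r$ term is available. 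Finally, to keep the objective well defined in the degenerate limits $p \to 0^+$ and $p \to \infty$, where $\log\lambda(B_{\|\cdot\|_p}(1))$ and the scores $\|\Lambda(y_i - \mu)\|_p$ may both diverge, I would invoke the extended-arithmetic convention from the notation paragraph; this does not affect the equality of optimal values.
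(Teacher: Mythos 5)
Your proof is correct and follows essentially the same route the paper intends: the corollary is stated as "building on the reasoning in Proposition~\ref{prop:min:vol:exact}," and your argument is exactly that homogeneity reduction (substitute $\Lambda=\nu M$, minimize out $\nu$ via the cardinality constraint to get $\sigma_r$, carry the now $p$-dependent constant $\log\lambda(B_{\|\cdot\|_p}(1))$ along), stopping at the $k\log\sigma_r$ form, which is indeed the natural endpoint here since the extra $\zeta$-rescaling step of the proposition cancels identically. Your side remarks on positive homogeneity for $p<1$ and on the sign convention ($y_i-\mu$ versus the paper's $y_i+\mu$, a harmless reparametrization of the center) are both sound.
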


Allowing $p$ to be learned from the data provides additional flexibility in shaping the prediction set. Unlike fixed-norm approaches that impose a predetermined geometry, optimizing $p$ enables the set to adapt to the distributional characteristics of the data. When $p$ is small, the prediction set tends to form elongated and anisotropic shapes, reflecting dominant directions of variability in the data. In contrast, larger values of $p$ yield more uniform contours, with $p = 2$ corresponding to ellipsoidal structures and $p \to \infty$ resulting in axis-aligned hyperrectangles. By jointly optimizing $\Lambda$, $\mu$, and $p$, we obtain a method that automatically selects the most appropriate norm, balancing volume minimization and coverage constraints. The optimal solution $(\Lambda^\star, \mu^\star, p^\star)$ obtained from problem~\eqref{eq:min:vol:exact:p} defines the minimum-volume covering set. Specifically, the learned $p$-norm-based prediction region is given by $\left\{ y \in \mathbb{R}^k  \mid  \| \Lambda^\star (y + \mu^\star) \|_{p^\star} \leq \sigma_r\left\{ \|\Lambda^\star (y_i + \mu^\star)\|_{p^\star} \right\} \right\}$.

\begin{figure}[t]
    \center
    \subfigure{\includegraphics[width=0.48\columnwidth]{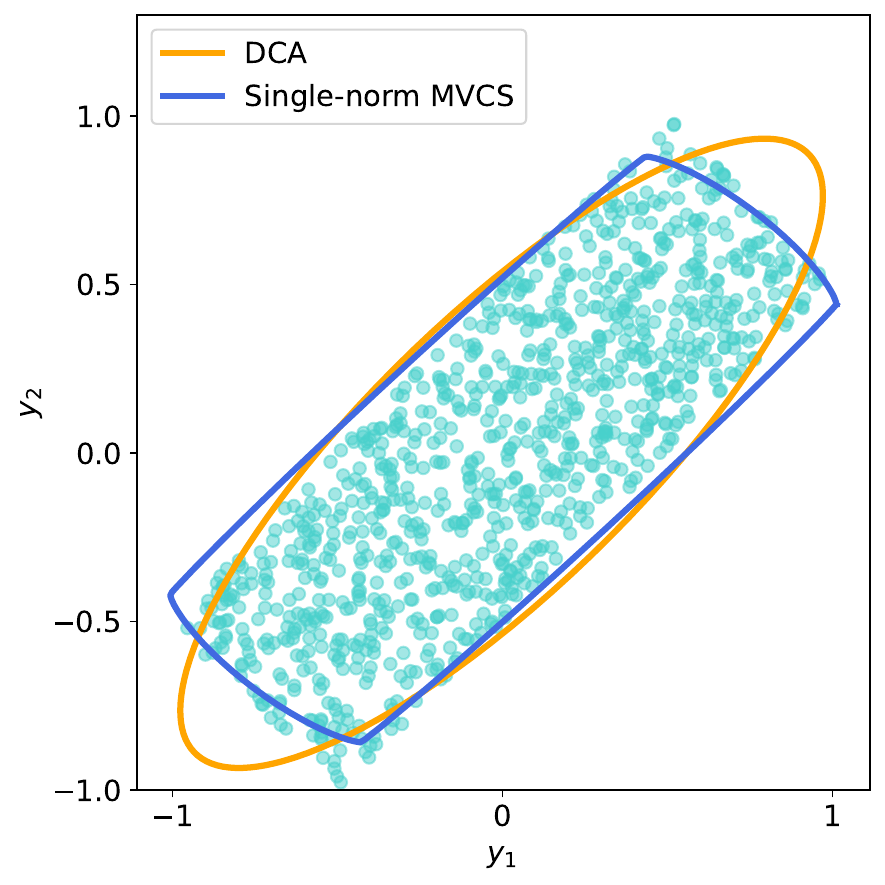}} ~
    \subfigure{\includegraphics[width=0.48\columnwidth]{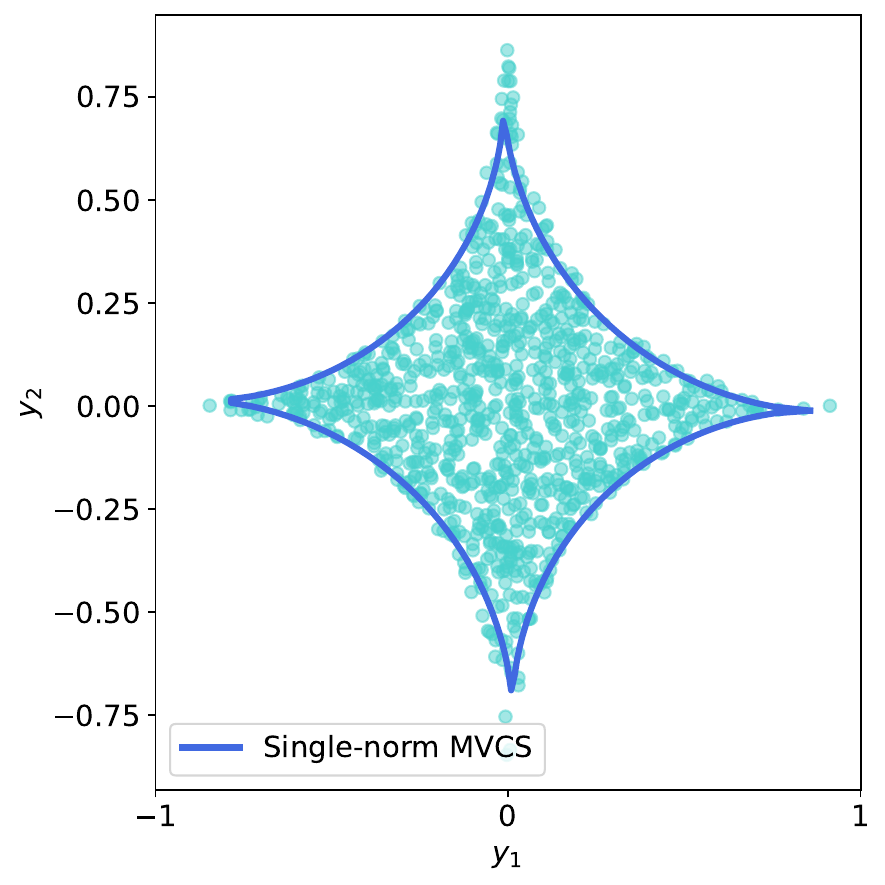}} 
    \vspace{-5mm}
    \caption{(Left) Comparison of MVCS sets obtained via the first-order optimization strategy described in Section~\ref{subsec:p:norms} for solving problem~\eqref{eq:min:vol:exact:p} and the DCA approach with $p=2$. Both methods achieve the target coverage level of $0.95$, but the learned MVCS optimizes $p = 1.11$, leading to a more adaptive shape. (Right) MVCS obtained via~\eqref{eq:min:vol:exact:p} for a dataset exhibiting strong anisotropy. The optimization procedure selects $p = 0.58$, resulting in an elongated set that better captures the structure of the data while achieving the target coverage level of $0.90$.}
    \label{figure:learn:p}
\end{figure}

\paragraph{First-order optimization strategy.}
While the optimization problem~\eqref{eq:min:vol:exact:p} is already nonconvex due to the function $\sigma_r$, the additional optimization over $p$ further complicates the landscape, breaking the difference-of-convex (DC) structure that allowed for efficient iterative optimization in the fixed-norm case. Consequently, direct application of the DCA is no longer possible, and solving for $p$ requires alternative strategies. {\color{black}We thus turn to first-order optimization methods, as explained next.

\begin{remark}[Unconstrained single-norm MVCS]
The primary challenge in designing such an algorithm is handling the constraint $\Lambda \geqm 0$. To eliminate this constraint, we parameterize $\Lambda$ as $\Lambda := A A^\top$, where $A \in \mathbb{R}^{k \times k}$. This parameterization ensures positive semidefiniteness without imposing additional assumptions on $A$. Additionally, to allow for unconstrained optimization over $p$, we reparameterize it as $|p|$, enabling updates without restrictions on its sign. With these transformations, the optimization problem reduces to minimizing the following \emph{unconstrained} loss function,
\begin{align*}
    - \log \det (A A^\top) + k \log \sigma_r\left\{ \|A A^\top (y_i + \mu) \|_{|p|} \right\} + \log \lambda(B_{\|\cdot \|_{|p|}}(1)),
\end{align*} 
over $A \in \mathbb{R}^{k \times k}$, $\mu \in \mathbb{R}^k$, and $p \in \mathbb{R}$. For this objective, standard first-order optimization algorithms, such as gradient descent, can be applied. For our experiments, we update gradients using the Adam optimizer \cite{kingma2014adam} in \texttt{PyTorch}, with a learning rate scheduler (see Appendix~\ref{app:hyperparameters}). After optimization, the final transformation is recovered as $\Lambda^* = A^* A^{*\top}$.
\end{remark}}

Despite the challenges induced by the nonconvexity  \cite{lee2016gradient}, we observe empirically that the joint optimization over {\color{black}$(A, \mu, p)$} with nonconvex gradient descent performs remarkably well in practice. In particular, our numerical experiments demonstrate that the learned values of $p$ often yield well-calibrated prediction sets that align with the intrinsic geometric properties of the data. We illustrate this in Figure~\ref{figure:learn:p}, where the left panel compares the MVCS obtained via problem~\eqref{eq:min:vol:exact:p} to the DCA approach with fixed $p=2$. While both maintain the prescribed coverage level, optimizing $p$ yields a more flexible shape that better fits the data distribution. The right panel further demonstrates this effect in an anisotropic setting, where the learned $p=0.58$ results in an elongated prediction set, adapting to the underlying structure.

Further illustrating the flexibility of our approach, Figure~\ref{figure:learn:3D} presents two three-dimensional examples where the learned values of $p$ adapt to the underlying data distribution. The left panel shows an elongated prediction set with $p=0.56$, learned from data uniformly distributed in an $\ell_{0.5}$-ball, capturing its directional spread. In contrast, the right panel corresponds to an exponential distribution, where the learned $p=8.19$ results in an axis-aligned prediction region, similar to the patterns observed in the right panel of Figure~\ref{figure:DC:CR}.

\begin{figure}[t]
    \center
    \subfigure{\includegraphics[width=0.48\columnwidth]{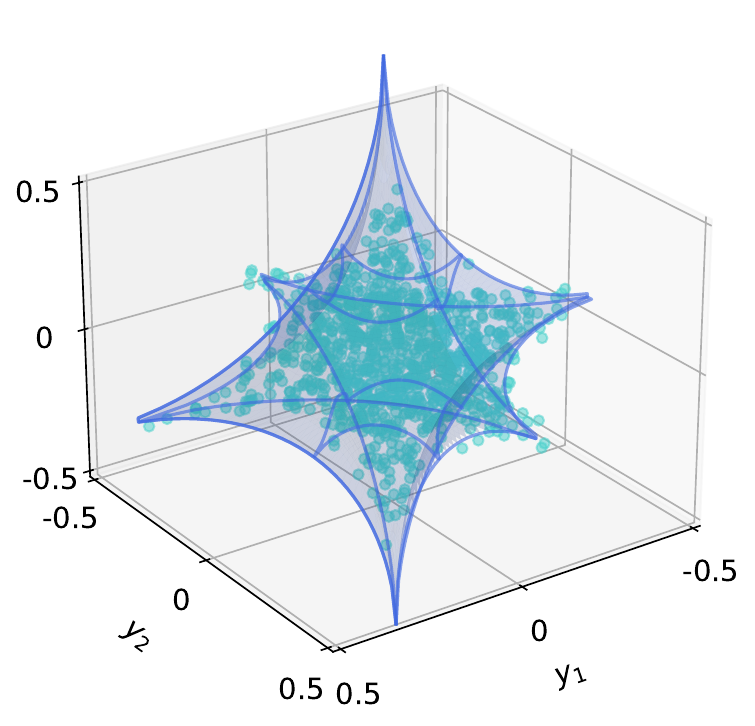}} ~
    \subfigure{\includegraphics[width=0.48\columnwidth]{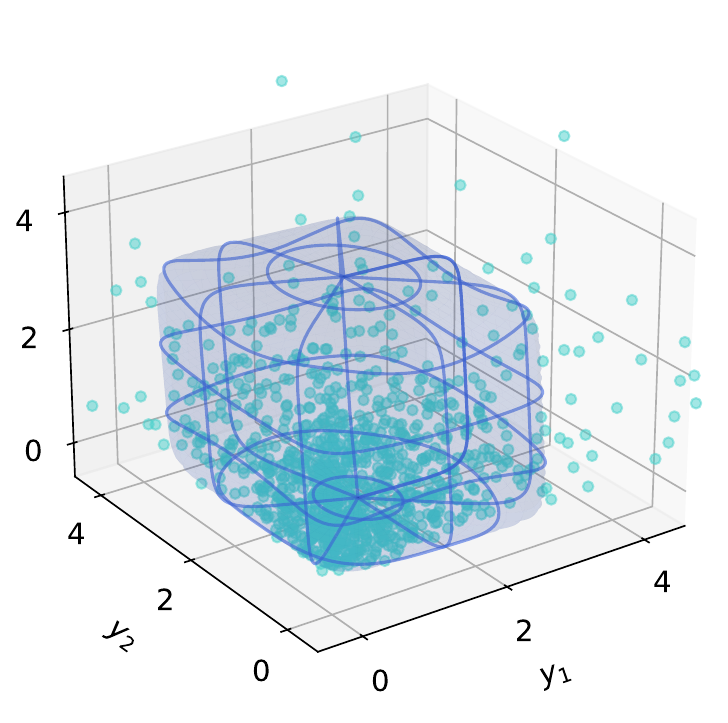}} 
\vspace{-5mm}
\caption{Visualization of MVCSs obtained via problem~\eqref{eq:min:vol:exact:p} in three dimensions. (Left) Prediction set learned for a uniform distribution over the 0.5-radius ball, with optimized $p=0.56$ adapting to the anisotropic structure of the data. (Right) MVCS learned for an exponential distribution, where $p=8.19$ results in an axis-aligned structure that reflects the concentration of the data.}
    \label{figure:learn:3D}
\end{figure}

\paragraph{Learning multi-norm prediction sets.}
Thus far, we have optimized prediction sets using a single global norm. While this provides flexibility, it fails to adapt to local variations in uncertainty. In many applications, uncertainty behaves asymmetrically—certain directions require tighter coverage, while others necessitate broader regions. To address this, we extend our framework to allow for region-dependent norms by partitioning the space into $m$ disjoint regions $\{\mathcal A_j\}_{j=1}^{m}$, each with its own norm~$p_j$. A natural way to define these partitions is through axis-aligned decomposition, where the space is split along coordinate axes. This approach ensures computational tractability and facilitates structured volume computations, allowing for at most $m = 2^k$ regions in dimension $k$. For example, when $k=2$, the space can be split into four quadrants, and when $k=3$, it can be divided into eight octants.

To construct these prediction sets, we introduce a global rotation matrix $R$ that aligns the space, followed by region-specific \emph{diagonal} scaling matrices $D_j$, for $j \in [m]$. The role of $R$ is to standardize the representation of data by aligning principal directions of variation with coordinate axes. Within each partition, the diagonal matrix $D_j$ then scales the axes independently, capturing anisotropic structure while preserving computational efficiency. Given a center $\mu$, we define the pseudo-distance 
\begin{align*}
    d(y, \mu; {R, \{D_j\}_{j=1}^m}) \coloneqq \sum_{j=1}^{m} \mathds{1}\{R(y - \mu) \in \mathcal A_j \} \left\|R D_j (y - \mu) \right\|_{p_j},
\end{align*}  
which induces a prediction set that takes the form
\begin{align*}
    \mathbb{M}\left( \{p_j\}_{j=1}^m, R, \{D_j\}_{j=1}^m, \mu \right) \coloneqq \left\{ y \in \mathbb{R}^k \mid d(y, \mu; {R, \{D_j\}_{j=1}^m}) \leq 1 \right\}.
\end{align*}
Using this decomposition, the multi-norm MVCS problem is formulated as
\begin{align}
\label{eq:min:vol:multi-p}
\begin{split}
\min \quad & \mathrm{Vol}\left(\mathbb{M}\left( \{p_j\}_{j=1}^m, R, \{D_j\}_{j=1}^m, \mu \right)\right)\\
\mathrm{s.t.} \quad & R \in \mathrm{SO}(k)
, \; \mu \in \mathbb{R}^k,\; D_j \in \mathrm{Diag}(k) \\
& D_j \geqm 0, \; p_j > 0, \quad \forall\, j \in [m] \\
& \mathrm{Card} \left\{ i \in [n] \mid d(y_i, \mu; {R, \{D_j\}_{j=1}^m}) \leq 1 \right\} \geq n - r + 1.
\end{split}
\end{align}
Here, the total volume of the multi-norm prediction set is given by the average of the region-specific volumes,
\begin{align}
\label{eq:vol:multi-p}
\mathrm{Vol}\left(\mathbb{M}\left( \{p_j\}_{j=1}^m, R, \{D_j\}_{j=1}^m, \mu \right)\right) = \frac{1}{m} \sum_{j=1}^{m} \mathrm{Vol}\left(\mathbb{B}(p_j,D_j,\mu)\right) = \frac{1}{m} \sum_{j=1}^{m}{\lambda(B_{\|\cdot \|_{p_j}}(1))}{\det(D_j)}^{-1},
\end{align}
where $\lambda(B_{\|\cdot \|_{p_j}}(1))$ represents Lebesgue measure of the unit $p_j$-norm ball, given explicitly in \eqref{eq:Lebesgue:p:norm}. The global rotation matrix $R$ does not affect the volume since it is an orthogonal transformation with $\det(R) = 1$, leaving the determinant term unchanged. Moreover, the uniform weighting of $1/m$ arises because the matrices $D_j$ are diagonal, so the total volume is taken as the average contribution across all regions. The following proposition provides a reformulation of \eqref{eq:min:vol:multi-p} that is amenable to iterative optimization algorithms.

\begin{proposition}[Multi-norm MVCS]
\label{prop:min:vol:multi}
The multi-norm MVCS problem~\eqref{eq:min:vol:multi-p} is equivalent to the following optimization problem
\begin{align}
\label{eq:min:vol:exact:multi-p}
\begin{split}
\min \quad & k \log \left(\sigma_r\left\{ d(y_i, \mu; {R, \{D_j\}_{j=1}^m}) \right\}\right) + \log \left(\sum_{j=1}^m {\lambda(B_{\|\cdot \|_{p_j}}(1))}{\det(D_j)}^{-1}\right) \\
\mathrm{s.t.} \quad & R \in \mathrm{SO}(k)
, \; \mu \in \mathbb{R}^k,\; D_j \in \mathrm{Diag}(k) \\
& D_j \geqm 0, \; p_j > 0, \quad \forall\, j \in [m].
\end{split}
\end{align}
\end{proposition}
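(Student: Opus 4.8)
The plan is to mirror the homogeneity argument of Proposition~\ref{prop:min:vol:exact} (equivalently Corollary~\ref{cor:min:vol:exact:p}), but carried out simultaneously across the $m$ regions, keeping the rotation $R$ fixed throughout since it does not interact with the scaling. First I would introduce a scalar $\nu \ge 0$ and rescale each diagonal block: replace $D_j$ by $D_j/\nu$, or equivalently write the cardinality constraint as $\mathrm{Card}\{i \in [n] \mid d(y_i,\mu;R,\{D_j\}) \le \nu\} \ge n-r+1$ after absorbing a common factor. The key observation is that the pseudo-distance $d(y,\mu;R,\{D_j\})$ is positively homogeneous of degree $1$ in the collection $\{D_j\}_{j=1}^m$ jointly: scaling every $D_j$ by $t>0$ scales $d$ by $t$, because on each region the term is $\|R D_j(y-\mu)\|_{p_j}$ and $R$ is just an isometry composed with the diagonal scaling. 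Meanwhile the volume~\eqref{eq:vol:multi-p} scales as $t^{-k}$ since each $\det(D_j)$ scales by $t^k$ and the average is taken with fixed weights $1/m$. This is exactly the same homogeneity balance ($d \mapsto td$, $\mathrm{Vol}\mapsto t^{-k}\mathrm{Vol}$) that drove the single-norm proof.

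Concretely, I would argue as follows. Fix $R, \mu, \{p_j\}$ and a feasible direction $\{D_j\}$, and note that the region-membership indicators $\mathds{1}\{R(y-\mu)\in\mathcal A_j\}$ depend only on $R$ and $\mu$ (axis-aligned orthants are scale-invariant), so rescaling $D_j \mapsto tD_j$ does not change which region a point falls in. Hence, exactly as in~\eqref{eq:min:vol:exact:temp:1}, for fixed $R,\mu,\{p_j\}$ and fixed "shape" (the $D_j$ up to a common positive scalar), the optimal common scale makes $\nu^\star$ equal to the $r$-th largest value $\sigma_r\{d(y_i,\mu;R,\{D_j\})\}$; substituting back converts the volume objective $\tfrac1m\sum_j \lambda(B_{\|\cdot\|_{p_j}}(1))\det(D_j)^{-1}$ times $\nu^{\star k}$ into the logarithmic form. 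Taking logs of $\mathrm{Vol}\cdot(\text{scaling})$ and dropping the constant $\log(1/m)$ yields precisely $k\log\sigma_r\{d(y_i,\mu;R,\{D_j\})\} + \log\big(\sum_j \lambda(B_{\|\cdot\|_{p_j}}(1))\det(D_j)^{-1}\big)$, which is the objective in~\eqref{eq:min:vol:exact:multi-p}; the constraint $R\in\mathrm{SO}(k)$, $D_j\in\mathrm{Diag}(k)$, $D_j\geqm 0$, $p_j>0$ carries over unchanged. One then checks the reverse direction: given any minimizer of~\eqref{eq:min:vol:exact:multi-p}, rescaling all $D_j$ by $\sigma_r\{d(y_i,\mu;R,\{D_j\})\}^{-1}$ produces a feasible point of~\eqref{eq:min:vol:multi-p} with matching volume, so the optimal values coincide.

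\textbf{Main obstacle.} The subtle point — and where I would spend the most care — is verifying that the common-scale reduction is legitimate despite the piecewise structure: one must confirm that (i) the orthant indicators are genuinely invariant under $D_j \mapsto tD_j$ (true because each $\mathcal A_j$ is a cone and the diagonal $D_j$ maps each orthant to itself, so $RD_j(y-\mu)$ lies in the same region as $R(y-\mu)$ — this needs the sign pattern of $D_j$ to be nonnegative, which is guaranteed by $D_j\geqm 0$), and (ii) minimizing jointly over $\{D_j\}$ can be decomposed as "optimize over the common scale $\nu$, then over the normalized shapes." Point (ii) is really the statement that the feasible set is a cone in $\{D_j\}$ (for fixed $R,\mu$) intersected with the epigraph-type cardinality constraint, so the one-dimensional minimization over $\nu$ can be separated out exactly as in the scalar $\zeta$-minimization at the end of the proof of Proposition~\ref{prop:min:vol:exact}. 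Everything else — convexity in $\zeta$/log-convexity, recovering $M^\star$-type formulas for each block via $D_j^\star = \sigma_r\{\cdots\}^{-1}D_j$ — is a routine repetition of the single-norm computation applied coordinatewise.
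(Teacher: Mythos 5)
Your proposal is correct and follows essentially the same homogeneity argument as the paper: rescale all $D_j$ by a common factor $\nu$, observe that the optimal $\nu$ is the $r$-th largest pseudo-distance, and substitute back (the paper's proof is exactly this, stated more tersely). Your extra care about the orthant indicators is fine but slightly more than needed, since in the definition of $d$ the indicator $\mathds{1}\{R(y-\mu)\in\mathcal A_j\}$ involves only $R$ and $\mu$, not $D_j$, so scale-invariance of region membership is immediate.
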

\begin{proof}
Using the change of coordinates $D_j \to D_j/\nu$, for $\nu \geq 0$, problem~\eqref{eq:min:vol:multi-p} is equivalent to
\begin{align}
\label{eq:min:vol:exact:multi-p:temp:1}
\begin{split}
\min \quad & k \log \nu + \log \left(\sum_{j=1}^m {\lambda(B_{\|\cdot \|_{p_j}}(1))}{\det(D_j)}^{-1}\right) \\
\mathrm{s.t.} \quad & R \in \mathrm{SO}(k)
, \; \mu \in \mathbb{R}^k,\; D_j \in \mathrm{Diag}(k),\; \nu \geq 0 \\
& D_j \geqm 0, \; p_j > 0, \quad \forall\, j \in [m] \\
& \mathrm{Card} \left\{ i \in [n] \mid d(y_i, \mu; {R, \{D_j\}_{j=1}^m}) \leq \nu \right\} \geq n - r + 1,
\end{split}
\end{align}
where we have used the expression of the volume provided in \eqref{eq:vol:multi-p}. Now, due the cardinality constraint, for given $R, \mu, \{D_j\}_{j=1}^m$, and $\{p_j\}_{j=1}^m$, the optimal $\nu^\star(R, \mu, \{D_j\}_{j=1}^m, \{p_j\}_{j=1}^m)$ is equal to the $(n-r+1)$-th smallest value in the set $\left\{d(y_i,\mu; {R, \{D_j\}_{j=1}^m})\right\}_{i=1}^n$, or, equivalently, the $r$-th largest value, i.e., $\nu^\star = \sigma_r\left\{ d(y_i,\mu; {R, \{D_j\}_{j=1}^m}) \right\}$. Plugging this into \eqref{eq:min:vol:exact:multi-p:temp:1}, we obtain problem \eqref{eq:min:vol:exact:multi-p} in the statement of the proposition.
\end{proof}

\textcolor{black}{
\begin{remark}[Unconstrained multi-norm MVCS]
Similarly to the single-norm approach, we can express the objective function in an unconstrained form as:
\begin{align*}
    k \log \left(\sigma_r\left\{ d(y_i, \mu; {R, \{|D_j|\}_{j=1}^m}) \right\}\right) + \log \left(\sum_{j=1}^m {\lambda(B_{\|\cdot \|_{|p_j|}}(1))}{\det(|D_j| )}^{-1}\right).
\end{align*}
To enable unconstrained optimization over the rotation matrix $R$, we introduce a parameterization based on the QR decomposition. Specifically, we optimize over a full matrix $Q \in \mathbb{R}^{k \times k}$ and extract the rotation matrix via the decomposition $Q = R_1 R_2$, where $R_1$ is an orthogonal matrix and $R_2$ is upper triangular. Since $R_1$ may have determinant $-1$, we ensure that $R$ remains a proper rotation matrix in $\mathrm{SO}(k)$ by setting 
$R = R_1$ if $\det R_1 = 1$, and $R = I_{-} R_1$ if $\det R_1 = -1$, where $I_{-}$ is a diagonal matrix with all ones except for a $-1$ in the first diagonal entry.
\end{remark}}

\begin{figure}[t]
    \center
    \subfigure{\includegraphics[width=0.48\columnwidth]{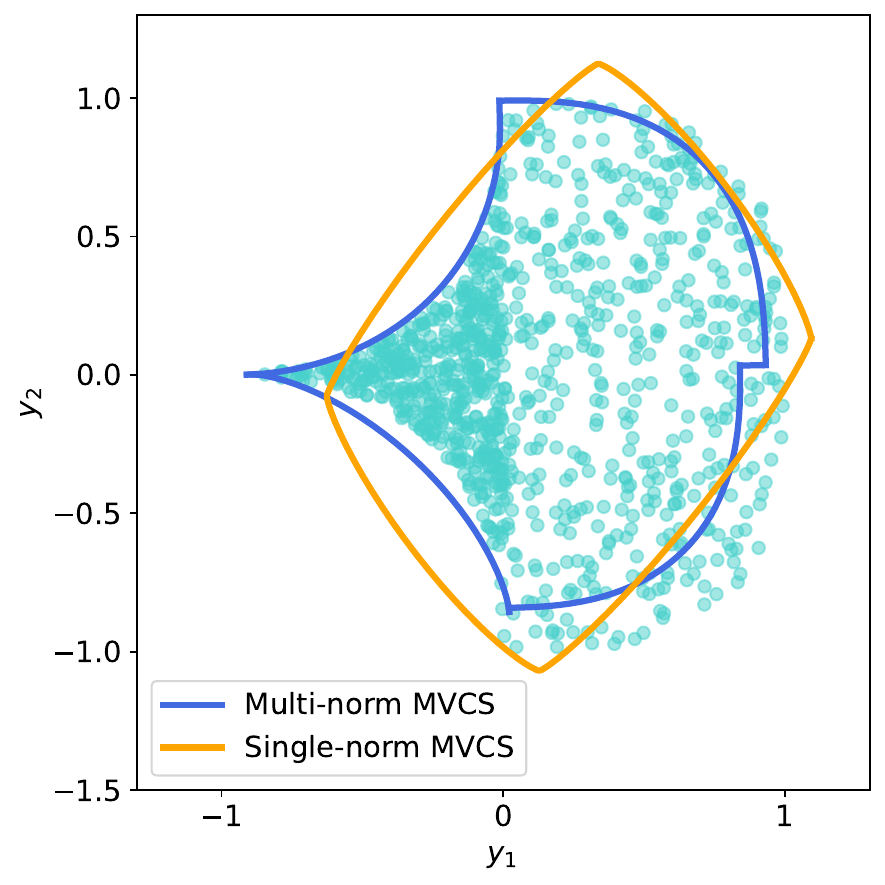}} ~
    \subfigure{\includegraphics[width=0.48\columnwidth]{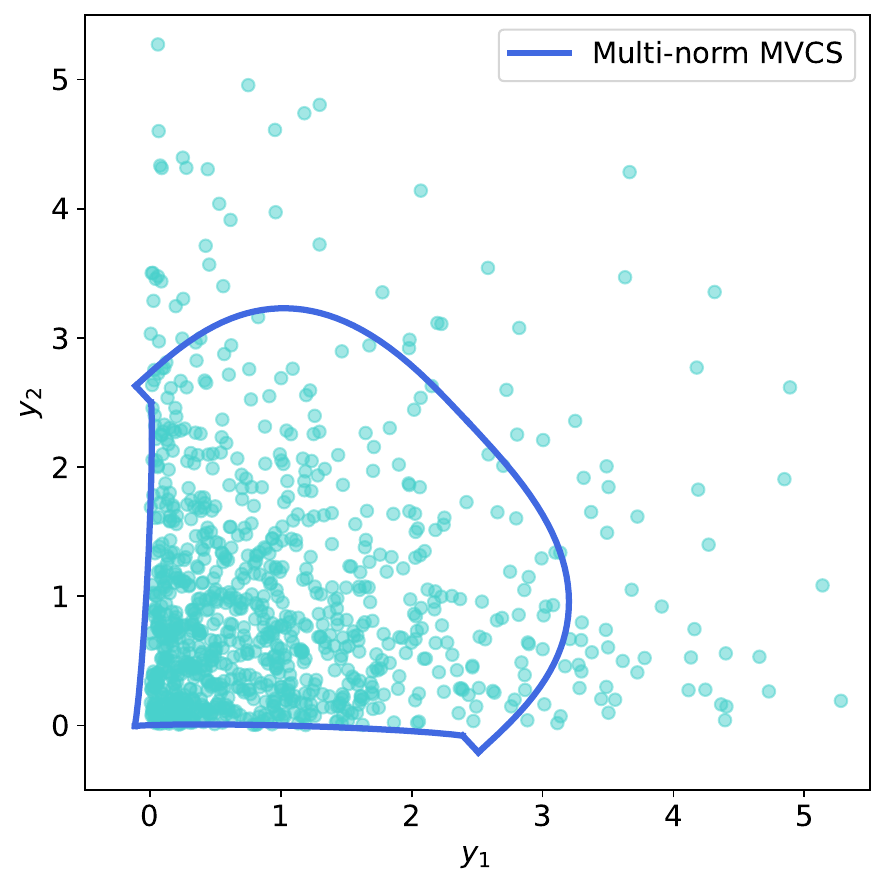}} 
\vspace{-5mm}
\caption{(Left) Comparison of MVCS sets obtained using a single learned norm versus a multi-norm formulation. The single-norm MVCS is learned with $p = 1.21$, while the multi-norm MVCS optimizes over four different norms, yielding region-specific values of $p = 0.68, 0.56, 2.52, 2.59$. (Right) A separate example of multi-norm MVCS with two learned norms, $p = 0.96, 2.90$. In both cases, the multi-norm approach leads to a more adaptive shape that captures the anisotropic structure of the data while maintaining the prescribed coverage level of $0.90$.}
    \label{figure:learn:multi:p}
\end{figure}

Problem~\eqref{eq:min:vol:exact:multi-p} provides a principled framework for learning adaptive multi-norm prediction sets, but it is nonconvex. Nonetheless, its structured formulation again allows for effective optimization using iterative approaches to obtain locally optimal solutions. Empirical results demonstrate that the learned multi-norm prediction sets significantly enhance flexibility and adaptivity, effectively capturing complex uncertainty structures across different regions of space. We illustrate this in Figure~\ref{figure:learn:multi:p}, where we compare MVCS sets obtained via the iterative procedure described in Section~\ref{subsec:training:procedure} for solving problem~\eqref{eq:min:vol:exact:multi-p}. The left panel contrasts the single-norm MVCS, learned with $p = 1.21$, against a multi-norm formulation that assigns distinct norms to different regions of the space. The multi-norm MVCS successfully adapts to the anisotropic data structure by varying $p$ across regions, yielding a more expressive and data-conforming prediction set. In the right panel, we further observe that learning multiple norms refines the predictive set shape, adjusting its geometry based on local uncertainty.

%------------------------------------------------------------------------

\subsection{Application to multivariate regression}
\label{subsec:application:regression}

In many practical scenarios, we aim to construct prediction sets for a multivariate response variable $Y \in \mathbb{R}^k$ given a covariate vector $X \in \mathbb{R}^d$. The MVCS framework naturally extends to this setting by modeling uncertainty through residuals relative to a predictive model. Let $f_\theta: \mathbb{R}^d \to \mathbb{R}^k$ be a parametric model with parameters $\theta$. If $f_\theta$ is fixed (i.e., fit on a separate dataset), we can apply the MVCS methodology to the residuals $\{y_i - f_\theta(x_i)\}_{i=1}^{n}$ to construct the prediction region
\begin{align*}
    C(x) \coloneqq f_\theta(x) + \mathbb{B}(p, M, \mu),
\end{align*}
where $\mathbb{B}(p, M, \mu)$ is learned using Corollary~\ref{cor:min:vol:exact:p}. The role of the correction term $\mu$ is to optimally position the covering set within the residual space. Unlike a shift meant to correct predictive bias in $f_\theta(x)$, $\mu$ aligns the set to minimize its volume while still maintaining coverage. This approach yields feature-dependent prediction regions by centering the set at $f_\theta(x) + \mu$. However, the estimated set remains \emph{globally constrained}, as the same structure is applied uniformly across all values of $x$. 
In Section~\ref{sec:multivariate:regression}, we introduce \emph{local adaptivity}, allowing the prediction set to adjust to variations in $x$ and better capture heteroskedastic uncertainty.

\paragraph{Optimizing over $\theta$.}
Rather than fixing $f_\theta$, a natural extension is to \textit{jointly optimize} over $\theta$ together with the MVCS parameters. This formulation seeks the predictive model that minimizes the volume of the covering set, effectively balancing predictive accuracy and uncertainty quantification. By directly learning $\theta$ to align with the minimum-volume criterion, the model adapts to the data in a way that tightens the overall uncertainty region. 

A key consequence of this joint optimization is that the correction term $\mu$ becomes unnecessary. When $f_\theta$ is fixed, $\mu$ is used to optimally reposition the prediction set. However, when optimizing $f_\theta$, the model itself can absorb any systematic shift. As a result, the optimization problem becomes
\begin{align}
\label{eq:min:vol:regression-theta}
\begin{split}
    \min \quad & - \log \det (\Lambda) + k \log \sigma_r\left\{ \|\Lambda (y_i - f_\theta(x_i)) \|_p \right\} + \log \lambda(B_{\|\cdot\|_p}(1)) \\
    \mathrm{s.t.} \quad & \Lambda \geqm 0,\; p > 0,\; \theta \in \Theta,
\end{split}
\end{align}
where $\Theta$ represents the space of permissible model parameters. This formulation has several advantages:
\begin{itemize}
    \item \emph{Unified learning.} The prediction function $f_\theta$ is no longer a separate entity; it is learned jointly with the uncertainty quantification model.
    \item \emph{Implicit set alignment.} Any shift that would have been captured by $\mu$ is now directly absorbed by the optimization over $f_\theta$.
    \item \emph{Tighter prediction sets.} The predictive model is adjusted not just for accuracy, but to yield the minimum-volume uncertainty set.
\end{itemize}

The joint optimization introduces additional computational complexity, but empirical results indicate that the method performs well in practice, effectively balancing predictive accuracy and uncertainty quantification while adapting to the structure of the data. We illustrate this in Figure~\ref{figure:learn:theta}, which presents the MVCSs constructed for different feature values $X_i$ in a setting where the noise distribution varies with $X$ (see Section~\ref{sec:numerical:validation} for details). The learned norms adapt to the residual structure, leading to different prediction set geometries. In the left panel, where the residuals are Gaussian, the optimization selects $p = 1.93$, while in the right panel, corresponding to an exponential residual distribution, the learned $p=2.24$. Notably, the same prediction set is applied uniformly across all $X$, as the transformation $M$ is learned globally rather than varying with $X$. In Section~\ref{sec:multivariate:regression}, we extend this framework to introduce local adaptivity, allowing the prediction sets to adjust to different feature values (see Figure~\ref{figure:learn:adaptive} for a direct comparison).

\begin{figure}[t]
    \center
    \subfigure{\includegraphics[width=0.48\columnwidth]{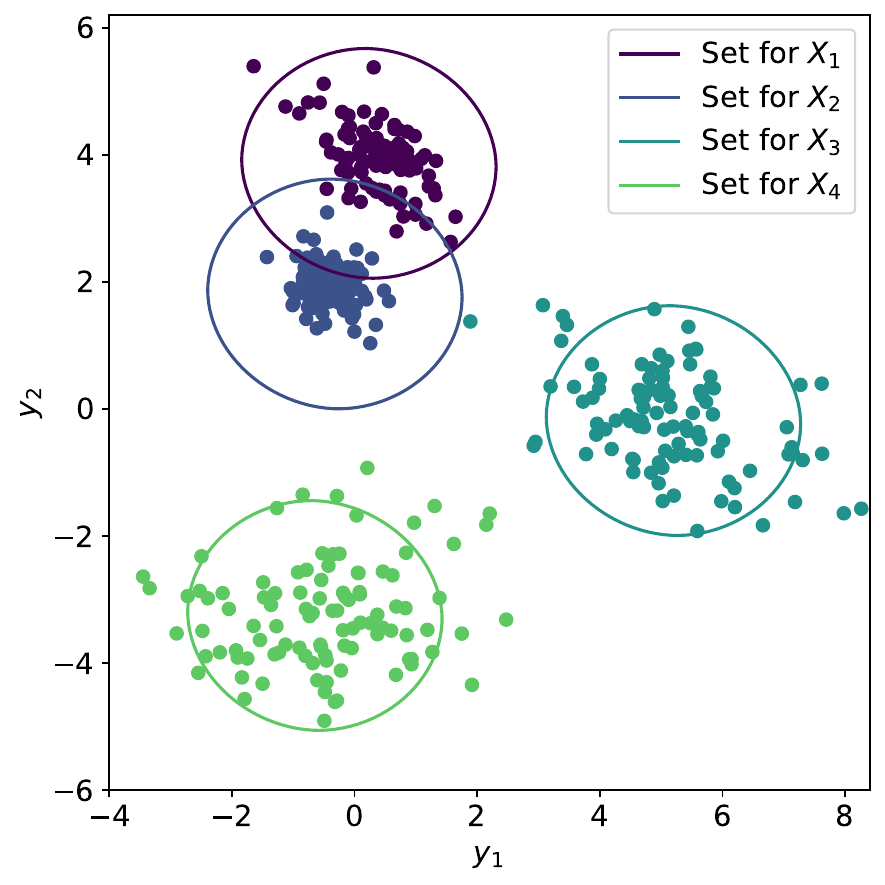}} ~
    \subfigure{\includegraphics[width=0.48\columnwidth]{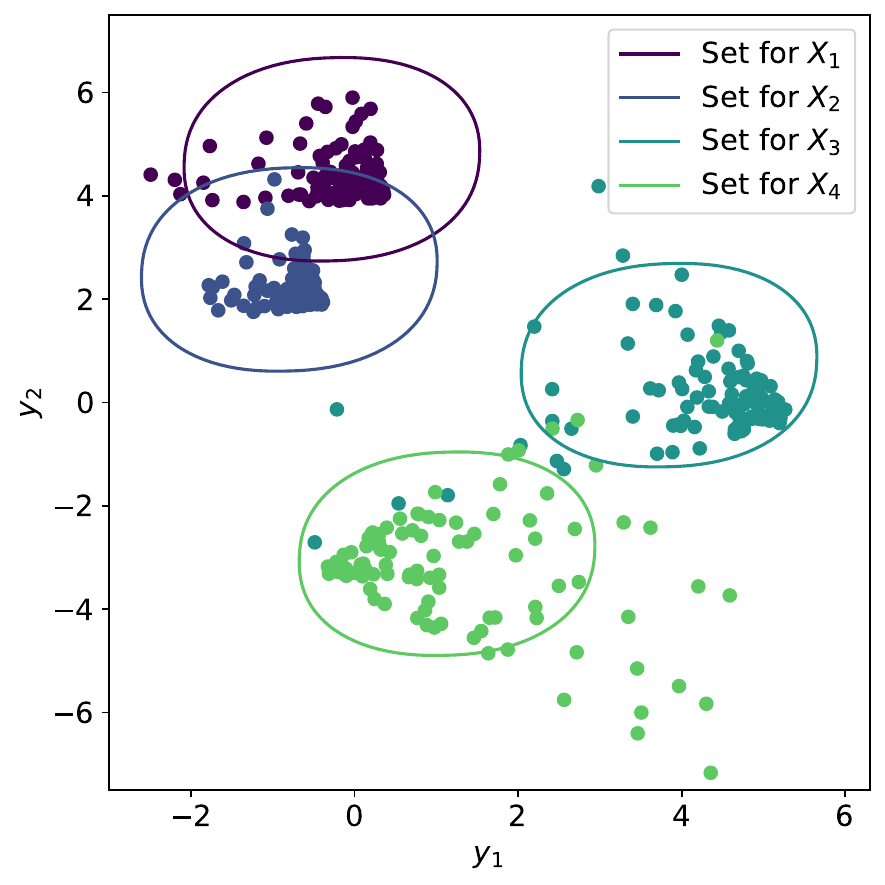}} 
    \vspace{-5mm}
    \caption{MVCSs obtained via the iterative method described in Section~\ref{subsec:training:procedure} for solving problem~\eqref{eq:min:vol:regression-theta}. Data is generated from $Y = f(X) + \text{noise}(X)$, where the noise is function of $X$ (see Appendix~\ref{app:synthetic:datageneration} for details). The models are trained on $10^4$ i.i.d.\ samples $(X_i, Y_i)$. Each color represents samples of $Y \mid X_i$ for different test points $X_i$, with the associated shape indicating the learned prediction set centered at $f_\theta(X_i)$. (Left) Gaussian residuals: learned $p = 1.93$. (Right) Exponential residuals: learned $p = 2.24$.}
    \label{figure:learn:theta}
\end{figure}

%------------------------------------------------------------------------
%------------------------------------------------------------------------
%------------------------------------------------------------------------

\section{Local Adaptivity in Multivariate Regression}
\label{sec:multivariate:regression}

In the previous section, we established a general framework for constructing minimum-volume covering sets and applied it to regression by modeling uncertainty through residuals. While this approach enables feature-dependent prediction by centering the region at $f_\theta(x) + \mu$, it assumes a globally shared uncertainty structure, applying the same prediction set across all feature values. However, residual distributions often exhibit heteroskedasticity, meaning that uncertainty varies significantly with $x$. A single global region may be overly conservative in some areas while too narrow in others. To address this, we now introduce \emph{local adaptivity}, where the prediction set varies with $x$ by learning an covariate-dependent transformation matrix $M(x)$ that adjusts the shape and size of the prediction set. Additionally, we jointly optimize the predictive model $f_\theta$ to minimize uncertainty while maintaining coverage.

Given a dataset $\{(x_i, y_i)\}_{i=1}^{n}$, we define the residuals $\{y_i - f_\theta(x_i)\}_{i=1}^n$, which quantify the discrepancy between predictions and observed responses. To account for feature-dependent variations in uncertainty, we introduce a local transformation $M(\cdot)$ and learn it from the residuals. The resulting prediction region is given by
\begin{align*}
    C(x) \coloneqq f_\theta(x) + \mathbb{B}(p, M(x), 0) = \mathbb{B}(p, M(x), f_\theta(x)),
\end{align*}
where $\mathbb{B}(p, M(x), 0)$ is a locally adaptive minimum-volume norm-based set, learned from the residuals to ensure the required coverage. As discussed in Section~\ref{subsec:application:regression}, centering the prediction set at zero is justified when additionally optimizing over $\theta$ since any systematic shift can be absorbed within $f_\theta(x)$. 

To formalize the objective of learning locally adaptive prediction sets, we consider the following population-level optimization problem:
\begin{align}
\label{eq:population:min:vol}
\begin{split}
    \min \quad & \mathbb{E} \left[ \text{Vol} (C(X)) \right] \\
    \mathrm{s.t.} \quad & \mathrm{Prob} \left\{ Y \in C(X) \right\} \geq 1 - \alpha.
\end{split}
\end{align}
Since the true distribution of $(X, Y)$ is unknown, we approximate this objective using the observed dataset $\{(x_i, y_i)\}_{i=1}^{n}$. Replacing the expectation in~\eqref{eq:population:min:vol} with an empirical mean and enforcing the coverage constraint empirically leads to the following M-estimation problem:
\begin{align}
\label{eq:ERM:min:vol}
\begin{split}
    \min \quad & \frac{1}{n} \sum_{i=1}^{n} \text{Vol}\left( \mathbb{B}(p, M(x_i), f_\theta(x_i)) \right) \\
    \mathrm{s.t.} \quad & M(\cdot) \geqm 0, \; p > 0,\; \theta \in \Theta, \\
    & \mathrm{Card} \left\{ i \in [n] \mid \|M(x_i) (y_i - f_{\theta}(x_i))\|_p \leq 1 \right\} \geq n - r + 1.
\end{split}
\end{align}
Here, $r$ is chosen such that at least $1 - \alpha$ of the training points are contained within their respective prediction sets. This formulation minimizes the average volume of the prediction sets while ensuring empirical coverage. 

We now derive an equivalent reformulation that facilitates efficient optimization. The following proposition formalizes the extension of the MVCS framework from formulation~\eqref{eq:min:vol:regression-theta} to locally adaptive transformations, allowing the prediction set to vary with $x$. While we present the result for a single norm $p$, this formulation can be immediately extended to multiple norms in the spirit of Proposition~\ref{prop:min:vol:multi}. {\color{black}However, our numerical experiments indicate that extending the multi-norm MVCS framework to locally adaptive transformations in regression often leads to overfitting. While multi-norm MVCS effectively captures complex uncertainty structures in Section~\ref{sec:min:vol:cov:shape}, applying it in a regression setting requires learning feature-dependent transformations, which substantially increases the complexity of the optimization landscape. As a result, the learned prediction sets may fit well for certain feature values $x_i$ but fail to generalize across the input space, leading to inconsistent adaptivity. This suggests that while multi-norm formulations offer greater flexibility, their practical success in regression depends on having sufficient training data and appropriate regularization to mitigate overfitting.}

\begin{figure}[t]
    \center
    \subfigure{\includegraphics[width=0.48\columnwidth]{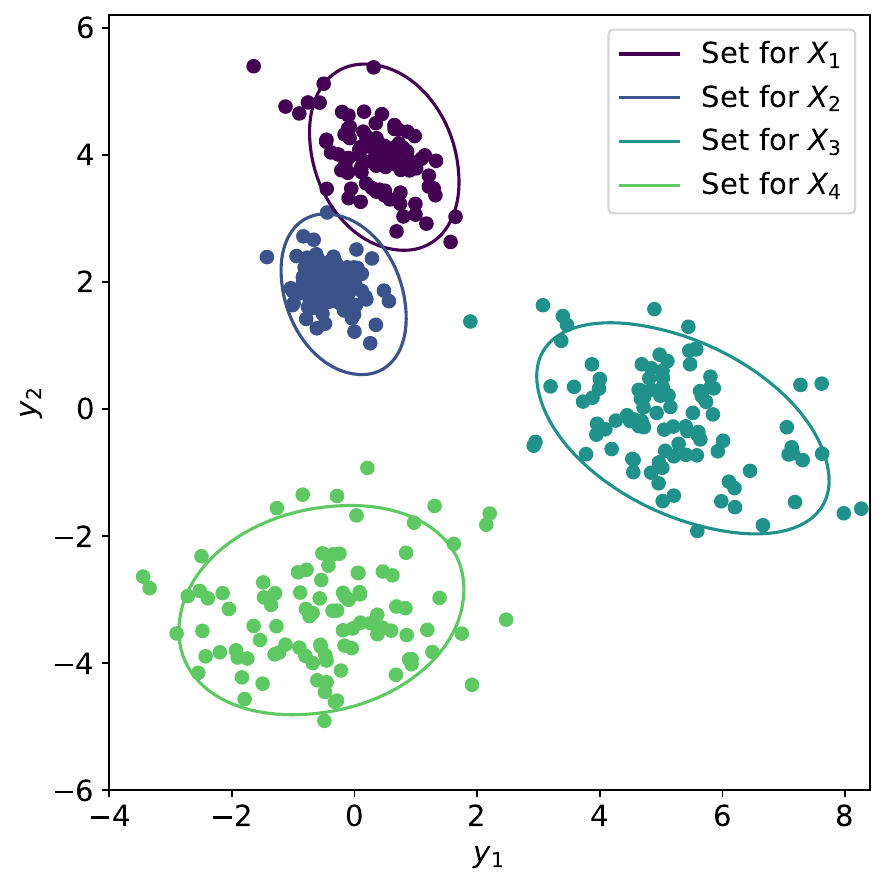}} ~
    \subfigure{\includegraphics[width=0.48\columnwidth]{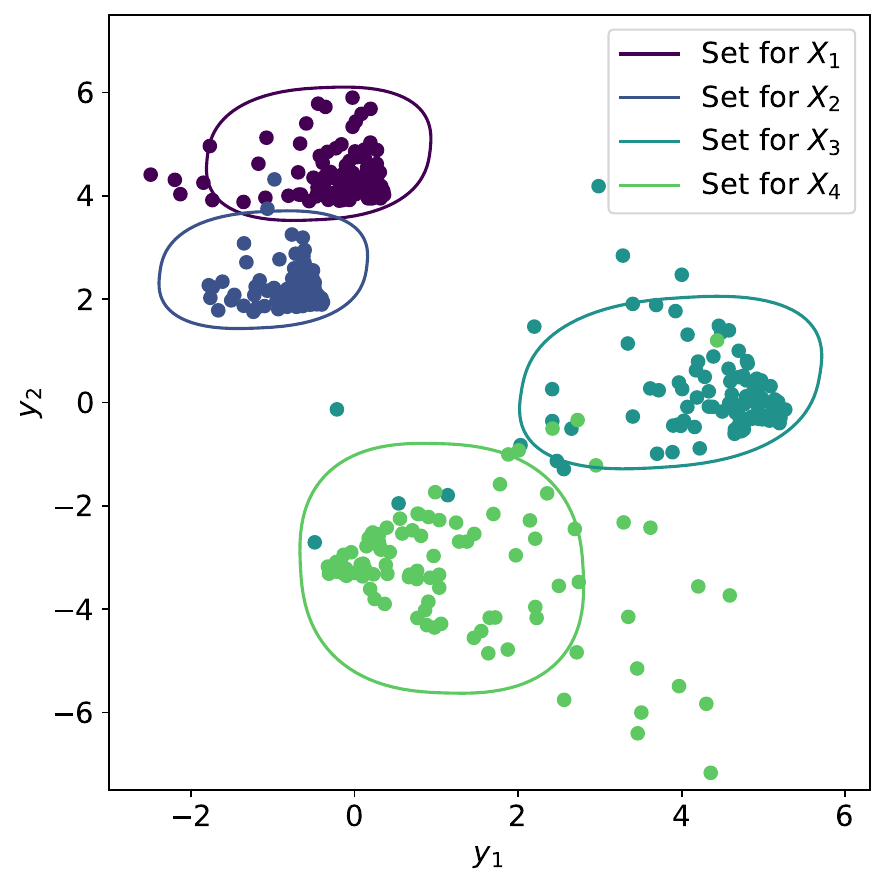}} 
    \vspace{-5mm}
    \caption{MVCSs obtained via the locally adaptive formulation~\eqref{eq:min:vol:exact:adaptive}. Each shape corresponds to the prediction set at a different feature value $X_i$. (Left) Gaussian residuals, where the learned $p=2.08$. (Right) Exponential residuals, where the optimization selects $p=2.52$. Compared to Figure~\ref{figure:learn:theta}, where a single global transformation was applied across all $X$, these results illustrate how local adaptivity allows the prediction sets to vary with $X$, capturing feature-dependent uncertainty more effectively.}
    \label{figure:learn:adaptive}
\end{figure}

\begin{proposition}[Locally adaptive MVCS]
\label{prop:min:vol:adaptive}
The locally adaptive MVCS problem~\eqref{eq:ERM:min:vol} is equivalent to the following optimization problem:
\begin{align}
\label{eq:min:vol:exact:adaptive}
\begin{split}
    \min \quad &  \log \left(\sum_{i=1}^{n} \frac{1}{\det (\Lambda(x_i))} \right) + k \log \sigma_r\left\{ \|\Lambda(x_i) (y_i - f_\theta(x_i)) \|_p \right\} + \log \lambda(B_{\|\cdot \|_p}(1)) \\
    \mathrm{s.t.} \quad & \Lambda(\cdot) \geqm 0, \; p>0, \; \theta \in \Theta,
\end{split}
\end{align}
and the optimal $M^\star(x)$ in \eqref{eq:ERM:min:vol} can be recovered from the optimal solution $\Lambda^\star(x)$ of \eqref{eq:min:vol:exact:adaptive} as
\begin{align*}
    M^\star(x) = \sigma_r\left\{ \|\Lambda^\star(x_i) (y_i - f_\theta(x_i)) \|_{p^\star} \right\}^{-1} \Lambda^\star(x).
\end{align*}
\end{proposition}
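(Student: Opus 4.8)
The plan is to mimic the homogeneity/scaling argument that proved Proposition~\ref{prop:min:vol:exact} and Corollary~\ref{cor:min:vol:exact:p}, but now carrying the locally-varying transformation $M(\cdot)$ and the predictor $f_\theta$ through the calculation. First I would rewrite the objective of~\eqref{eq:ERM:min:vol} using the explicit volume formula $\mathrm{Vol}(\mathbb{B}(p,M(x_i),f_\theta(x_i))) = \lambda(B_{\|\cdot\|_p}(1)) \det(M(x_i))^{-1}$, so the objective becomes $\tfrac{1}{n}\lambda(B_{\|\cdot\|_p}(1)) \sum_{i=1}^n \det(M(x_i))^{-1}$. Then I would introduce a single global scalar $\nu \geq 0$ and the change of variables $\Lambda(x_i) := \nu M(x_i)$ for every $i$ simultaneously (the same $\nu$ across all $x_i$, since the objective couples all data points through a sum). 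Under this substitution $\det(M(x_i))^{-1} = \nu^k \det(\Lambda(x_i))^{-1}$, and the coverage constraint $\|M(x_i)(y_i - f_\theta(x_i))\|_p \le 1$ becomes $\|\Lambda(x_i)(y_i - f_\theta(x_i))\|_p \le \nu$. Taking logs, the objective is $k\log\nu + \log\!\big(\sum_i \det(\Lambda(x_i))^{-1}\big) + \log\lambda(B_{\|\cdot\|_p}(1)) - \log n$, so minimizing over $\nu$ alone (for fixed $\Lambda(\cdot),p,\theta$) and using the cardinality constraint forces $\nu^\star = \sigma_r\{\|\Lambda(x_i)(y_i - f_\theta(x_i))\|_p\}$, exactly as in the earlier proofs. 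Substituting $\nu^\star$ back and dropping the additive constant $\log n$ yields~\eqref{eq:min:vol:exact:adaptive}, and unwinding the substitution gives $M^\star(x) = (\nu^\star)^{-1}\Lambda^\star(x) = \sigma_r\{\|\Lambda^\star(x_i)(y_i - f_\theta(x_i))\|_{p^\star}\}^{-1}\Lambda^\star(x)$.

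A complementary step is the reverse scale-invariance argument used in Proposition~\ref{prop:min:vol:exact}: one should check that~\eqref{eq:min:vol:exact:adaptive} is itself invariant under $\Lambda(x_i) \mapsto \zeta \Lambda(x_i)$ for a common $\zeta > 0$, because the $\log\det$ term appears inside a sum (producing $-k\log\zeta$ up to the same shift for every term, hence $-k\log\zeta$ after the $\log$ of the sum) while the $k\log\sigma_r$ term contributes $+k\log\zeta$; these cancel, so no spurious degree of freedom is introduced and the two formulations genuinely share the same optimal value. I would state this cancellation explicitly to justify that passing to $\Lambda(\cdot)$ loses nothing relative to optimizing $M(\cdot)$ directly, and to confirm that the recovery formula for $M^\star(x)$ is well-defined (the recovered $M^\star(x)$ satisfies exactly the normalization that makes $n-r+1$ of the transformed residuals lie in the unit ball).

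The main obstacle — really the only delicate point — is handling the fact that the change of variables $\Lambda(x_i) = \nu M(x_i)$ uses a \emph{single} scalar $\nu$ shared across all $n$ data points, even though $M(\cdot)$ is a function that can vary arbitrarily with $x$. One must argue that this is without loss of generality: given any feasible $M(\cdot)$ for~\eqref{eq:ERM:min:vol}, the pair $(\Lambda(\cdot), \nu) = (M(\cdot), 1)$ is feasible for the intermediate problem with the same objective value, and conversely given any $(\Lambda(\cdot),\nu)$ feasible for the intermediate problem, $M(\cdot) := \Lambda(\cdot)/\nu$ is feasible for~\eqref{eq:ERM:min:vol} with the same objective (here one uses that scaling the $\Lambda(x_i)$ uniformly and rescaling $\nu$ accordingly leaves both the objective and the constraint set unchanged, which is exactly the scale-invariance observed above). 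Because $M(\cdot)$ is an unconstrained function and the coverage constraint is imposed pointwise in $i$, no interaction between different $x_i$'s obstructs this, and the argument goes through verbatim. The rest is the routine optimization over the scalar $\nu$ (a differentiable convex one-dimensional problem whose minimizer is pinned down by the cardinality constraint, precisely as in the proof of Proposition~\ref{prop:min:vol:exact}), together with bookkeeping of the constant $\log\lambda(B_{\|\cdot\|_p}(1))$ and the irrelevant shift $-\log n$.
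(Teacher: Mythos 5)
Your proposal is correct and follows essentially the same route as the paper's proof: rewrite the objective via the volume formula, introduce a single global scalar $\nu$ through $\Lambda(x) = \nu M(x)$, observe that the objective is increasing in $\nu$ so the cardinality constraint pins $\nu^\star$ to the $r$-th largest score, and substitute back. Your additional checks — that the shared $\nu$ is without loss of generality and that the reformulation is scale-invariant under $\Lambda(\cdot) \mapsto \zeta\Lambda(\cdot)$ — are points the paper leaves implicit, and they only strengthen the argument.
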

\begin{proof}
Using the change of coordinates $\Lambda \coloneqq \nu M$, for $\nu \geq 0$, we have that problem~\eqref{eq:ERM:min:vol} is equivalent to
\begin{align}
\label{eq:min:vol:exact:adaptive:temp:1}
\begin{split}
\min \quad & k \log \nu + \log \left( \sum_{j=1}^m {\lambda(B_{\|\cdot \|_{p}}(1))}{\det(\Lambda(x_i))}^{-1}\right) \\
\mathrm{s.t.} \quad & \Lambda(\cdot) \geqm 0, \; p>0, \; \theta \in \Theta,\; \nu \geq 0 \\
& \mathrm{Card} \left\{ i \in [n] \mid \|\Lambda(x_i) (y_i - f_{\theta}(x_i))\|_p \leq \nu \right\} \geq n - r + 1,
\end{split}
\end{align}
where we have used the expression of the volume provided in \eqref{eq:volume}. Now, due the cardinality constraint, for given $\Lambda, p, \theta$, the optimal $\nu^\star(\Lambda, p, \theta)$ is equal to the $(n-r+1)$-th smallest value in the set $\left\{ \|\Lambda(x_i) (y_i - f_{\theta}(x_i))\|_p \right\}_{i=1}^n$, or, equivalently, the $r$-th largest value, i.e., $\nu^\star = \sigma_r\left\{ \|\Lambda(x_i) (y_i - f_{\theta}(x_i))\|_p \right\}$. Plugging this into \eqref{eq:min:vol:exact:adaptive:temp:1}, we obtain problem \eqref{eq:min:vol:exact:adaptive}.
\end{proof}

The first term in the objective function ensures that the \emph{average volume} of the prediction sets across all feature values is minimized, rather than optimizing a single global shape. This accounts for varying uncertainty levels throughout the feature space. The second term enforces the coverage constraint by selecting the $r$-th largest nonconformity score, ensuring that at least $n - r + 1$ points remain within their respective sets. 

\textcolor{black}{
\begin{remark}[Unconstrained locally adaptive MVCS]
Extending the single-norm formulation from Section~\ref{sec:min:vol:cov:shape}, we express the objective function in an unconstrained form as:
\begin{equation}
\label{eq:loss:MVCS:adaptive}
 \log \left(\sum_{i=1}^{n} \frac{1}{\det (A(x_i)A(x_i)^\top)} \right) \\
+ k \log \sigma_r\left\{ \|A(x_i) A(x_i)^\top (y_i + f_\theta(x_i) \|_{|p|} \right\} \\
+ \log \lambda(B_{\|\cdot \|_{|p|}}(1)),
\end{equation}
where $A(\cdot)$ is a function mapping features $x \in \mathbb{R}^d$ to a transformation matrix in $\mathbb{R}^{k\times k}$, ensuring that the learned transformation remains positive semidefinite via the parameterization $\Lambda(x) = A(x) A(x)^\top$. Moreover, the reparameterization $|p|$ allows for unconstrained optimization over the norm parameter, enabling updates without restrictions on its sign. This formulation facilitates efficient first-order gradient-based optimization without requiring explicit constraints on $\Lambda(\cdot)$ or $p$.
\end{remark}}

While optimizing over $A(\cdot)$ introduces additional computational complexity, it provides a significant advantage: the resulting prediction sets are tighter and better calibrated to local uncertainty. Empirical results confirm that this adaptivity leads to meaningful improvements, particularly in settings where uncertainty varies across feature space. We illustrate this in Figure~\ref{figure:learn:adaptive}, where MVCSs are constructed in a locally adaptive manner. Compared to Figure~\ref{figure:learn:theta}, where a single transformation was applied globally, the prediction sets here adjust to the residual distribution at each feature value $x_i$. The left panel, corresponding to Gaussian residuals, yields $p=2.08$, while the right panel, with exponential residuals, results in $p=2.52$. By allowing the prediction sets to vary with $X$, this approach more effectively captures heteroskedasticity, improving both calibration and efficiency.

%------------------------------------------------------------------------
%------------------------------------------------------------------------
%------------------------------------------------------------------------

\section{Conformalized Minimum-Volume Prediction Sets}
\label{sec:conformalizing:MVVCS}

Thus far, we have developed a framework for constructing minimum-volume prediction sets by leveraging geometric structure in residuals. These sets are designed to provide high coverage within the training data, and we need to ensure that the coverage properties extend to unseen test samples. We do this by integrating the methods discussed thus far with conformal prediction. 

\textcolor{black}{The conformalization procedure we adopt follows the standard split conformal prediction framework \cite{vovk2005algorithmic, papadopoulos2002inductive}, with the key difference being our choice of nonconformity scores, which are derived from the locally adaptive MVCS sets presented in Section~\ref{sec:multivariate:regression}.} We consider a setting where we are given a dataset of $n$ i.i.d.\ samples $(X_i, Y_i) \sim \mathbb P$, where $\mathbb P$ is the unknown joint distribution of covariates $X_i \in \mathbb{R}^d$ and responses $Y_i \in \mathbb{R}^k$. Our goal is to construct covariate-dependent prediction sets $C(x)$ that satisfy the coverage guarantee
\begin{align*}
    \mathrm{Prob} \left\{ Y \in C(X) \right\} \geq 1 - \alpha,
\end{align*}
for a prescribed confidence level $1 - \alpha$. In line with the split conformal prediction framework, we partition the dataset into \emph{independent} sets:
\begin{itemize}
    \item $\mathcal{D}_1$, the \emph{proper training set}, with $\card(\mathcal{D}_1) = n_1$;
    \item $\mathcal{D}_2$, the \emph{calibration set}, with $\card(\mathcal{D}_2) = n_2$.
\end{itemize}
Using $\mathcal{D}_1$, we fit a predictive model $f_\theta$ and estimate a minimum-volume transformation function $M(x)$ using the optimization formulation presented in Proposition~\ref{prop:min:vol:adaptive}. The choice of structure—whether to use a fixed norm, a single learned $p$-norm, or a multiple-norm extension—determines the learned transformation and the associated prediction set shape. For simplicity, we present a version of the conformalization procedure in which the prediction sets are derived using a single learned $p$-norm, though the same approach extends to arbitrary norm choices and multi-norm formulations. \textcolor{black}{Thus, while the conformalization step follows the standard framework, our methodology is distinguished by the use of locally adaptive MVCS-based scores.} In this setting, we define the nonconformity scores as
\begin{align*} 
    s(x, y) \coloneqq \|M(x) (y - f_\theta(x))\|_p.
\end{align*}
To simplify notation, we denote the scores computed on the calibration set as $S_i = s(X_i, Y_i)$, for $i \in [n_2]$. These scores are then used to determine a conformalized threshold that ensures the required coverage. Specifically, we compute an empirical quantile of the calibration scores, incorporating a finite-sample correction. The threshold is given by
\begin{align*}
    \widehat{q}_\alpha = \lceil (1 - \alpha)(n_2 + 1) \rceil \text{-smallest value of } S_i, \text{ for } i \in [n_2].
\end{align*}
The quantity $\widehat{q}_\alpha$ defines the conformalized radius that determines the final prediction region. By leveraging the independent calibration dataset, this approach guarantees that the constructed prediction sets satisfy the marginal coverage requirement in a finite-sample sense. For a new test point $X_{n+1}$, we define the conformalized prediction set as
\begin{align}
\label{eq:conformalized-prediction}
    C(X_{n+1}) \coloneqq \left\{ y \in \mathbb{R}^k \mid s(X_{n+1},y) \leq \widehat{q}_\alpha \right\} = \left\{ y \in \mathbb{R}^k \mid \|M(X_{n+1}) (y - f_\theta(X_{n+1}))\|_p \leq \widehat{q}_\alpha \right\}.
\end{align}
This formulation maintains the same geometric structure as the minimum-volume prediction sets derived in Section~\ref{sec:multivariate:regression}, but now scales their size using $\widehat{q}_\alpha$ to guarantee marginal coverage. We now formally state the finite-sample coverage property of the conformalized MVCS.

\begin{lemma}[MVCS finite-sample guarantee]
\label{prop:conformal-coverage}
Let $(X_{n+1}, Y_{n+1})$ be a test point sampled from $\mathbb P$, independent of the calibration samples $(X_1, Y_1), \dots, (X_{n_2}, Y_{n_2})$. Then, the prediction set $C(X_{n+1})$ defined in \eqref{eq:conformalized-prediction} satisfies\footnote{Assuming that there are almost surely no ties among the scores $\{S_i\}_{i=1}^{n_2} \cup \{S_{n+1}\}$.}
\begin{align*}
    \mathrm{Prob} \left\{ Y_{n+1} \in C(X_{n+1}) \mid \{(X_i, Y_i)\}_{i \in \mathcal{D}_1} \right\} \in \left[ 1 - \alpha, 1 - \alpha + \frac{1}{n_2+1} \right).
\end{align*}
\end{lemma}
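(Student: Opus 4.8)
The plan is to run the standard split-conformal rank argument, with the only care needed being the conditioning on $\mathcal D_1$. First I would observe that once we condition on $\{(X_i,Y_i)\}_{i\in\mathcal D_1}$, the predictive model $f_\theta$, the transformation map $M(\cdot)$, and the exponent $p$ produced by solving the optimization problem in Proposition~\ref{prop:min:vol:adaptive} are all \emph{fixed deterministic} objects. Hence the nonconformity score function $s(x,y)=\|M(x)(y-f_\theta(x))\|_p$ is a fixed measurable function, and the calibration scores $S_1,\dots,S_{n_2}$ together with the test score $S_{n+1}:=s(X_{n+1},Y_{n+1})$ are obtained by applying one fixed function to the i.i.d.\ (hence exchangeable) sample $(X_1,Y_1),\dots,(X_{n_2},Y_{n_2}),(X_{n+1},Y_{n+1})$, which is independent of $\mathcal D_1$. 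Therefore, conditional on $\mathcal D_1$, the scores $S_1,\dots,S_{n_2},S_{n+1}$ are exchangeable.

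Next I would translate the coverage event into a rank statement. Writing $S_{(1)}\le\cdots\le S_{(n_2)}$ for the order statistics of the calibration scores and $k^\star:=\lceil(1-\alpha)(n_2+1)\rceil$, the threshold is $\widehat q_\alpha=S_{(k^\star)}$, and by \eqref{eq:conformalized-prediction} the event $Y_{n+1}\in C(X_{n+1})$ is exactly $\{S_{n+1}\le\widehat q_\alpha\}$. Under the no-ties assumption, $S_{n+1}\le S_{(k^\star)}$ holds if and only if the rank of $S_{n+1}$ within the pooled set $\{S_1,\dots,S_{n_2},S_{n+1}\}$ is at most $k^\star$. Exchangeability of the $n_2+1$ scores (conditional on $\mathcal D_1$) makes this rank uniformly distributed on $\{1,\dots,n_2+1\}$, so
\begin{align*}
    \mathrm{Prob}\bigl\{Y_{n+1}\in C(X_{n+1})\mid \{(X_i,Y_i)\}_{i\in\mathcal D_1}\bigr\}
    = \frac{k^\star}{n_2+1}
    = \frac{\lceil(1-\alpha)(n_2+1)\rceil}{n_2+1}.
\end{align*}

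Finally I would apply the elementary ceiling bounds $(1-\alpha)(n_2+1)\le \lceil(1-\alpha)(n_2+1)\rceil < (1-\alpha)(n_2+1)+1$, divide by $n_2+1$, and read off that the conditional coverage lies in $\bigl[\,1-\alpha,\ 1-\alpha+\tfrac{1}{n_2+1}\bigr)$, as claimed. I do not expect a genuine obstacle here: the argument is the textbook split-conformal proof, and the only point requiring attention is making explicit that conditioning on $\mathcal D_1$ freezes the learned score function so that exchangeability of the calibration-plus-test scores is valid; the rest is the uniform-rank identity and the ceiling estimate. (If one wanted to be thorough about ties, one would note that the stated footnote hypothesis of almost surely distinct scores is exactly what makes the rank genuinely uniform and the event characterization exact.)
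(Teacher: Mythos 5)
Your proposal is correct and follows exactly the argument the paper invokes: condition on $\mathcal{D}_1$ to freeze the learned score function, note the pooled calibration-plus-test scores are exchangeable (indeed i.i.d.), translate coverage into the rank of $S_{n+1}$ being at most $\lceil(1-\alpha)(n_2+1)\rceil$, and apply the uniform-rank identity with the ceiling bounds. The paper only sketches this as "a standard rank argument," so your write-up is simply a more explicit version of the same proof.
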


The proof of Lemma~\ref{prop:conformal-coverage} follows directly from standard conformal prediction arguments. Given that the calibration scores $\{S_i\}_{i \in \mathcal{D}_2}$ and the test score $S_{n+1}$ are i.i.d., the probability that $S_{n+1}$ is at most the $\lceil (1 - \alpha)(n_2 + 1) \rceil$-th smallest calibration score follows a standard rank argument, yielding the desired result. This conformalization procedure ensures that the minimum-volume prediction sets constructed in Section~\ref{sec:multivariate:regression} achieve valid marginal coverage guarantees without requiring additional parametric assumptions on the residual distribution. 

While conformalization provides finite-sample coverage guarantees, it does require a separate calibration dataset, as the conformalized radius is determined independently from the training process. However, this tradeoff is often well-justified:
\begin{itemize}
    \item \emph{Improved generalization}: Conformalized MVCSs prevent overfitting to training data, ensuring reliable uncertainty quantification.
    \item \emph{Minimal computational overhead}: The conformal step only requires computing empirical quantiles, making it computationally efficient.
    \item \emph{Applicability to any MVCS formulation}: The method is fully compatible with any choice of norm structure (fixed-norm, single-norm, or multi-norm).
\end{itemize}
In practice, the benefits of conformalization are observed in improved test coverage and well-calibrated prediction sets. The next section presents empirical results demonstrating these advantages across various regression tasks.

%------------------------------------------------------------------------
%------------------------------------------------------------------------
%------------------------------------------------------------------------

\section{Numerical Validation}
\label{sec:numerical:validation}

All the results presented in this section are fully reproducible and available in the associated GitHub repository.\footnote{\url{https://github.com/ElSacho/MVCS}}

%------------------------------------------------------------------------

\subsection{Training procedure}
\label{subsec:training:procedure}

Optimizing the training objective \eqref{eq:min:vol:exact:adaptive} is challenging due to its nonconvexity. To enable efficient minimization using first-order optimization techniques, we introduce a structured parameterization. Specifically, both the predictive function for the center $f_\theta$ and the transformation matrix $\Lambda_\phi$ are modeled using neural networks. To ensure positive semi-definiteness of the learned transformation, we define a mapping $A_{\phi} : \mathbb{R}^d \to \mathbb{R}^{k \times k}$ that outputs a $k \times k$ matrix and set $\Lambda_{\phi}(x) = A_{\phi}(x) A_{\phi}(x)^T$. This guarantees that $\Lambda_{\phi}(x)$ satisfies the constraints of the optimization problem while allowing for flexible, data-driven adaptation. {\color{black}This parameterization enables direct (unconstrained) minimization of the loss in~\eqref{eq:loss:MVCS:adaptive}.} Our training procedure consists of three sequential stages:

\begin{itemize}
    \item \emph{Pretraining the predictive model.} We first train the predictive model $f_{\theta}$, which determines the center of the prediction set, using a mean squared error objective. This pretraining step stabilizes the learning process and improves convergence.
    
    \item \emph{Optimizing the transformation matrix.} Keeping $f_{\theta}$ fixed, we then optimize the matrix model by adjusting $\Lambda_{\phi}(x)$ based on the residuals. This sequential approach is motivated by empirical observations: when $f_{\theta}$ overfits, the residual structure deteriorates, making it difficult for the matrix model to generalize effectively.
    
    \item \emph{Joint optimization.} Finally, we jointly optimize both $\theta$ and $\phi$, refining the predictive model and the uncertainty quantification simultaneously. To approximate the coverage constraint efficiently in mini-batch training, we compute the $r$-th largest nonconformity score within each batch, where $r = \lfloor \alpha B \rfloor$ and $B$ denotes the batch size.
\end{itemize}

This structured approach ensures a balance between expressivity and stability, preventing overfitting while effectively capturing the heteroskedastic structure of the data. The resulting prediction sets are both adaptive and well-calibrated, as demonstrated in our empirical evaluations.

%------------------------------------------------------------------------

\subsection{Comparative baselines}

We compare our method (single-norm MVCS) against three established baselines: Naïve Quantile Regression \cite{dheur2025multi} (naïve QR), the empirical covariance matrix \cite{johnstone2021conformal} (emp.\ cov.), and the local empirical covariance method \cite{messoudi2022ellipsoidal} (loc.\ emp.\ cov.). Below, we summarize these approaches.

\begin{itemize}
    \item \emph{Naïve quantile regression.} This strategy involves performing quantile regression using the pinball loss \cite{Steinwart_2011} for the quantiles ${\tilde{\alpha}}/{2}$ and $1-{\tilde{\alpha}}/{2}$ along all axes on a training dataset. To avoid overcovering, we select $\tilde{\alpha} = 2(1 - (1 - \alpha)^{1/k})$, as suggested by previous work \cite{dheur2025multi} to fit the quantile networks $q$. The prediction regions are then conformalized on a calibration dataset using the score
    \begin{align*}
        s(x, y) = \max_{i\in [k]} \{ q_{{\tilde{\alpha}}/{2}}(x)_i - y_i, y_i - q_{1 - {\tilde{\alpha}}/{2}}(x)_i \},
    \end{align*}
    obtaining the calibration quantile $\hat{q}^p_\alpha$. At test time, the coverage and average volume of the region are computed on a test dataset, with the volume for each test point $x$ given by $\prod_i (q_{1 - {\tilde{\alpha}}/{2}}(x)_i - q_{{\tilde{\alpha}}/{2}}(x)_i + 2\hat{q}^p_\alpha)$.

    \item \emph{Empirical covariance matrix.} A widely used alternative constructs ellipsoidal prediction regions by leveraging the empirical covariance structure of the residuals \cite{johnstone2021conformal}. The estimated covariance matrix $\hat{\Sigma}$ is computed from the training residuals, and the prediction region is shaped using the Mahalanobis norm. The nonconformity score is given by
    \begin{align*}
        s(x, y) := \|\hat{\Sigma}^{-1/2}(y - f_{\theta}(x))\|_2.
    \end{align*}
    We compute the calibration quantile $\hat{q}^e_\alpha$ with the calibration dataset and the volume of the ellipsoid is $\lambda(B_{\|\cdot \|_{2}}(1)) \cdot (\det((\hat{q}^e_\alpha)^2 \cdot \hat{\Sigma}))^{1/2}$.

    \item \emph{Local  empirical covariance matrix.} While the global covariance structure captures overall variability, it fails to adapt to local heteroskedasticity. To address this limitation, a local empirical covariance approach \cite{messoudi2022ellipsoidal} estimates $\hat{\Sigma}_x$ using only the $m$ nearest neighbors of $x$ in the training dataset. The nonconformity score is then computed as
    \begin{align*}
        s(x, y) := \|\hat{\Sigma}^{-1/2}_x(y - f_{\theta}(x))\|_2.
    \end{align*}
    We compute the calibration quantile $\hat{q}^l_\alpha$ with the calibration dataset and the corresponding volume for a test point $x$ is $\lambda(B_{\|\cdot \|_{2}}(1)) \cdot (\det((\hat{q}^l_\alpha)^2 \cdot \hat{\Sigma}_x))^{1/2}$. By locally adapting the covariance structure, this method yields more flexible prediction sets that better reflect data-dependent uncertainty.
\end{itemize}

These competing strategies serve as baselines for evaluating our proposed approach, demonstrating its advantages in both adaptivity and efficiency.

%------------------------------------------------------------------------

\subsection{Synthetic dataset}

Throughout our experiments, we utilize four datasets: a training dataset, a validation dataset for model selection, a calibration dataset for conformalization, and a test dataset for evaluating final coverage and volume. This setup follows standard practice in the literature.

\paragraph{Conditional coverage.}
A key consideration in conformal prediction is whether a given strategy maintains conditional coverage. While marginal coverage guarantees ensure that $\mathrm{Prob}\left\{Y \in C(X)\right\} \geq 1 - \alpha$, conditional validity (i.e., requiring $\mathrm{Prob}\{Y \in C(X) \mid X\} = 1 - \alpha$) is generally unattainable without strong assumptions on the data distribution \cite{foygel2021limits}. Aggressively minimizing volume can degrade conditional validity \cite{huang2023uncertainty}, as our optimization objective \eqref{eq:min:vol:exact:adaptive} does not explicitly enforce constraints on the $r$-largest values. However, the neural network parameterization promotes generalization, leading to empirically favorable results despite the lack of theoretical convergence guarantees due to nonconvexity.

To assess conditional coverage empirically, we conduct a 1D regression experiment with $Y = f(X) + t(X) B$, where $(X, Y) \in \mathbb{R}^2$ and $B \sim \mathcal{E}(1)$ follows an asymmetric exponential distribution scaled by $t(X) = 0.5 + 2X$. We compare our method against the naïve quantile regression strategy. Figure~\ref{figure:conditional:1D} reports results for a target coverage of $0.99$ (additional results for $0.90$ are in Appendix Figure~\ref{fig:conditional:1d:09}). We observe that our adaptive framework effectively adjusts to the data, showing no clear signs of degraded conditional coverage.

\begin{figure}[t]
    \center
    \subfigure{\includegraphics[width=0.48\columnwidth]{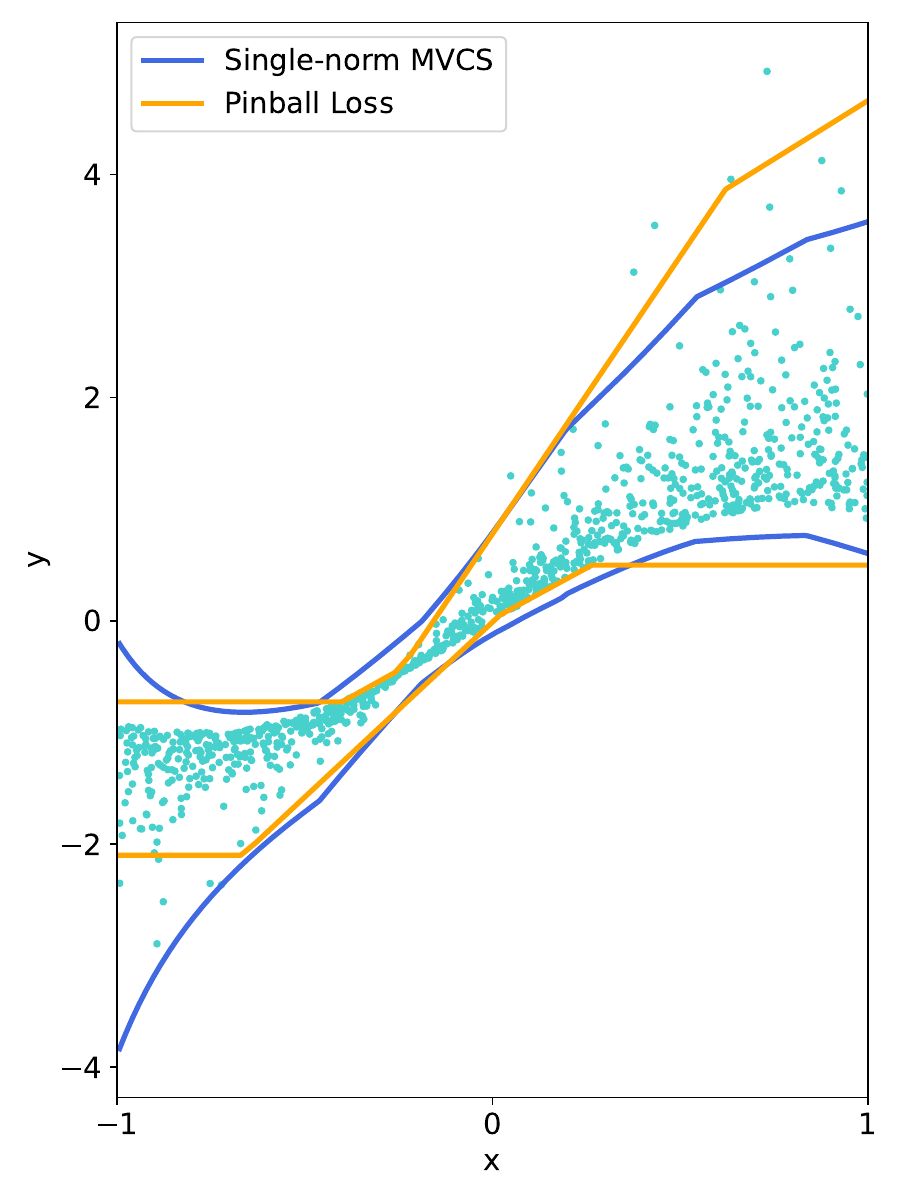}} ~
    \subfigure{\includegraphics[width=0.48\columnwidth]{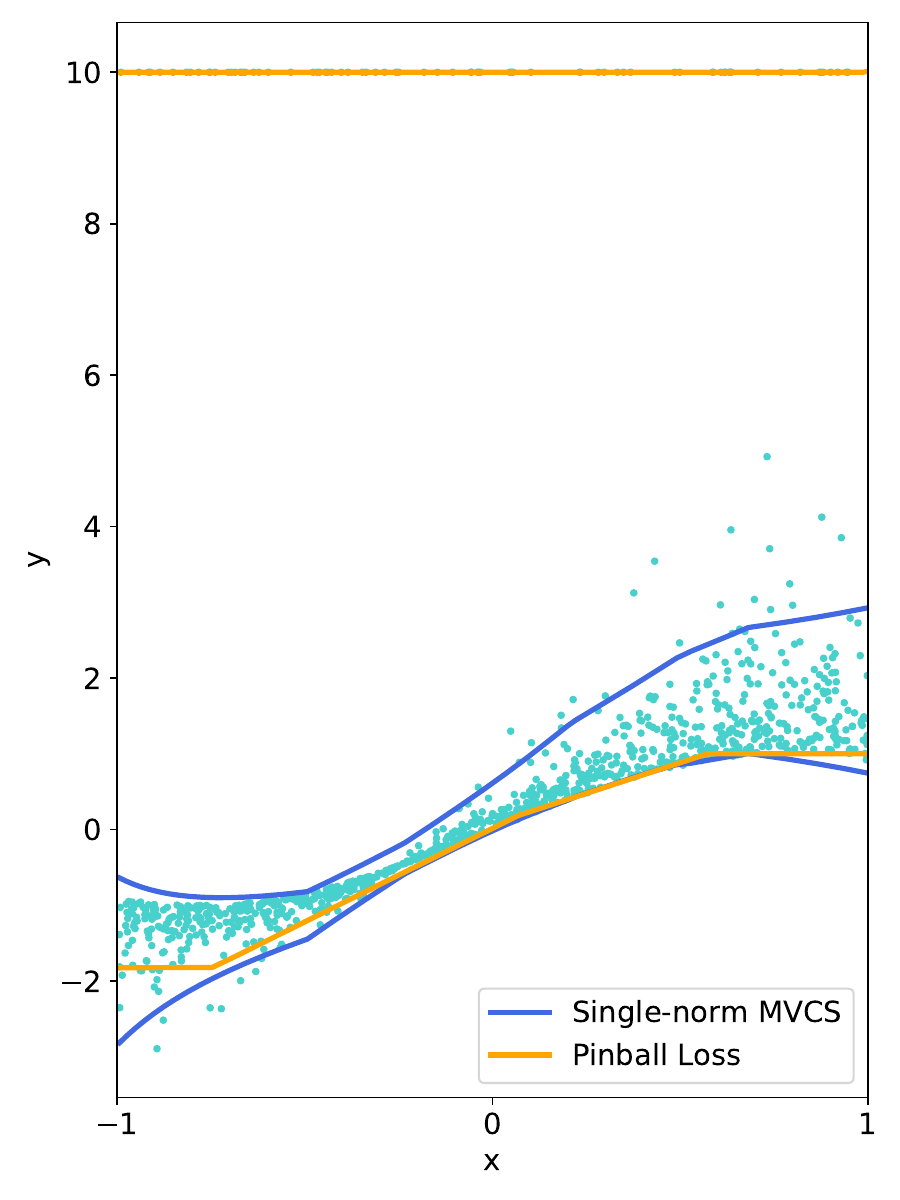}} 
    \vspace{-5mm}
    \caption{(Left) Conformal prediction sets for a target coverage level of $0.99$. (Right) Conformal prediction sets for a target coverage level of $0.90$, with $3\alpha/4$ of the data consisting of outliers.}
    \label{figure:conditional:1D}
\end{figure}

\paragraph{Robustness to outliers.}
Our loss function is inherently robust to outliers, up to a fraction of $\alpha$. To illustrate this, we replicate the previous experiment while introducing extreme values for a fraction $3\alpha/4$ of the data points.\footnote{These extreme values are fixed at 10.} We compare our approach with the pinball loss and the standard nonconformity score, using the same evaluation framework. The pinball loss is trained on the quantiles $\alpha/2$ and $1 - \alpha/2$, but under contamination, its optimal quantiles shift, leading to unreliable prediction sets. In particular, when the proportion of outliers in the upper quantile exceeds $\alpha/2$, the estimated $(1 - \alpha)$-quantile is dominated by extreme values, compromising the validity of the prediction sets. This highlights the limitations of quantile-based methods in the presence of outliers, whereas our approach remains robust, maintaining stable and reliable uncertainty quantification.

\paragraph{Multivariate regression.}
We evaluate our method’s performance in multivariate regression by comparing the average volume and achieved coverage of the prediction sets on a test dataset against baseline approaches. The data is generated as $Y = f(X) + t(X) B$, where $X \sim \mathcal{N}(0, I_4)$ and $Y \in \mathbb{R}^4$. The noise term $B$ follows a fixed distribution and is modulated by the transformation $t(X)$, with further details provided in Appendix~\ref{app:synthetic:datageneration}. 

We assess performance across four synthetic settings: (i) fixed exponential noise ($t$ constant), (ii) exponential noise with a variable transformation $t(X)$, (iii) fixed Gaussian noise, and (iv) Gaussian noise with a transformation. The volume of each prediction set is computed as the mean volume across test points. To account for the effect of dimensionality, we normalize each volume by taking its $1/k$ power. Coverage is measured as the proportion of test samples contained within their respective sets. The empirical covariance baseline corresponds to the predictor with the lowest Mean Squared Error (MSE), selected before fine-tuning with our proposed loss. Results are reported for a target coverage of $0.90$, averaged over 10 runs, excluding the highest and lowest volumes. In all cases, our method achieves the smallest average volume (Table~\ref{tab:volume:synthetic:0.9}) while maintaining valid marginal coverage (Table~\ref{tab:coverage:synthetic:0.9}). Additional results for a coverage level of $0.99$ are provided in Appendix~\ref{app:synthetic:additional:results}.

\begin{table}[t]
\centering
\begin{tabular}{@{}lcccc@{}}
\toprule
Dataset & Naïve QR & Emp.\ Cov.\ & Loc.\ Emp.\ Cov.\ & MVCS \\
\midrule
Exp. Fixed  & $7.63 \pm 0.11$  & $6.57 \pm 0.12$  & $6.45 \pm 0.14$  & $\textbf{6.00} \pm 0.09$ \\ \hline
Exp. Transformed & $ 10.66 \pm 0.13$  & $9.46 \pm 0.19$  & $9.40 \pm 0.17$  & $\textbf{8.45} \pm 0.09$ \\ \hline
Gau. Fixed  & $ 6.63 \pm 0.09$  & $5.43 \pm 0.09$  & $5.34 \pm 0.08$  & $\textbf{5.11} \pm 0.10$ \\ \hline
Gau. Transformed  & $ 9.21 \pm 0.06$  & $8.39 \pm 0.05$  & $7.97 \pm 0.09$  & $\textbf{7.43} \pm 0.06$ \\ \hline
\bottomrule
\end{tabular}
\caption{Table for Normalized Volume (desired coverage 0.9).}
\label{tab:volume:synthetic:0.9}
\end{table}

\begin{table}[t]
\centering
\begin{tabular}{@{}lcccc@{}}
\toprule
Dataset & Naïve QR & Emp.\ Cov.\ & Loc.\ Emp.\ Cov.\ & MVCS \\
\midrule
Exp. Fixed    & $89.6 \pm 0.7$  & $89.7 \pm 0.6$  & $89.8 \pm 0.5$  & $89.7 \pm 0.6$ \\ \hline
Exp. Transformed   & $89.6 \pm 0.7$  & $89.9 \pm 0.7$  & $89.9 \pm 0.7$  & $90.0 \pm 0.8$ \\ \hline
Gau. Fixed    & $89.9 \pm 0.6$  & $90.0 \pm 0.5$  & $89.8 \pm 0.6$  & $89.7 \pm 0.7$ \\ \hline
Gau. Transformed    & $89.8 \pm 0.5$  & $90.0 \pm 0.4$  & $89.8 \pm 0.4$  & $89.7 \pm 0.3$ \\ \hline
\bottomrule
\end{tabular}
\caption{Table for Coverage in \% (desired coverage 0.9).}
\label{tab:coverage:synthetic:0.9}
\end{table}

%------------------------------------------------------------------------

\subsection{Real datasets}

We evaluate our approach on nine benchmark datasets commonly used in multivariate regression studies \cite{dheur2025multi, feldman2023calibrated, wang2023probabilistic}. Details regarding these datasets, including the response dimensionality and hyperparameter choices, are provided in Appendix~\ref{app:dataset:information}. We compare different strategies for both $\alpha = 0.1$ and $\alpha = 0.01$. Each dataset is split into four subsets—training, validation, calibration, and test—using respective proportions of 70\%, 10\%, 10\%, and 10\%.\footnote{Except for the energy dataset, for which the data splitting is 55\%, 15\%, 15\%, 15\% due to the smaller amount of samples.} To ensure robust evaluation, results are averaged over 10 independent runs, discarding the highest and lowest volume values. Additionally, each volume is scaled by taking its $1/k$-th power to normalize for dimensionality.

To preprocess the data, we apply a quantile transformation to both the covariates ($X$) and responses ($Y$) using \texttt{scikit-learn} \cite{pedregosa2011scikit}, ensuring normalization across datasets. Our approach consistently minimizes the volume of prediction sets while maintaining valid marginal coverage. Specifically, we achieve the best average volume in sixteen out of eighteen experiments while preserving the desired coverage level (Tables~\ref{tab:coverage:real:0.9} and~\ref{tab:coverage:real:0.99}).

\begin{table}[t]
\centering
\begin{tabular}{@{}lcccc@{}}
\toprule
Dataset & Naïve QR & Emp.\ Cov.\ & Loc.\ Emp.\ Cov.\ & MVCS \\
\midrule
Bias correction & $1.29 \pm 0.02$  & $\textbf{1.26} \pm 0.03$  & $1.45 \pm 0.10$  & $1.33 \pm 0.24$ \\ \hline
CASP & $1.40 \pm 0.01$  & $1.52 \pm 0.02$  & $1.44 \pm 0.02$  & $\textbf{1.32} \pm 0.02$ \\ \hline
Energy & $1.28 \pm 0.11$  & $1.10 \pm 0.16$  & $1.10 \pm 0.16$  & $\textbf{0.97} \pm 0.13$ \\ \hline
House & $1.37 \pm 0.02$  & $1.39 \pm 0.02$  & $1.38 \pm 0.02$  & $\textbf{1.33} \pm 0.02$ \\ \hline
rf1 & $0.43 \pm 0.02$  & $0.44 \pm 0.02$  & $0.64 \pm 0.03$  & $\textbf{0.39} \pm 0.05$ \\ \hline
rf2 & $0.61 \pm 0.01$  & $0.42 \pm 0.02$  & $0.44 \pm 0.02$  & $\textbf{0.35} \pm 0.01$ \\ \hline
scm1d & $2.71 \pm 0.09$  & $1.74 \pm 0.06$  & $1.74 \pm 0.06$  & $\textbf{1.47} \pm 0.08$ \\ \hline
scm20d & $3.45 \pm 0.47$  & $2.64 \pm 0.49$  & $2.64 \pm 0.49$  & $\textbf{1.51} \pm 0.03$ \\ \hline
Taxi & $3.48 \pm 0.02$  & $3.42 \pm 0.04$  & $3.35 \pm 0.03$  & $\textbf{3.18} \pm 0.02$ \\ \hline
\bottomrule
\end{tabular}
\caption{Table for Normalized Volume (desired coverage 0.9).}
\label{tab:volume:real:0.9}
\end{table}

\begin{table}[t]
\centering
\begin{tabular}{@{}lcccc@{}}
\toprule
Dataset & Naïve QR & Emp.\ Cov.\ & Loc.\ Emp.\ Cov.\ & MVCS \\
\midrule
Bias correction  & $90.5 \pm 1.1$  & $90.2 \pm 1.3$  & $90.1 \pm 1.0$  & $90.3 \pm 0.8$ \\ \hline
CASP  & $89.9 \pm 0.5$  & $90.1 \pm 0.3$  & $89.9 \pm 0.4$  & $90.1 \pm 0.4$ \\ \hline
Energy  & $89.3 \pm 2.6$  & $91.4 \pm 3.8$  & $91.4 \pm 3.8$  & $90.7 \pm 2.3$ \\ \hline
House  & $90.3 \pm 0.8$  & $90.3 \pm 0.6$  & $90.6 \pm 0.7$  & $90.3 \pm 0.7$ \\ \hline
rf1  & $89.9 \pm 1.0$  & $89.3 \pm 1.8$  & $89.3 \pm 1.1$  & $89.4 \pm 1.4$ \\ \hline
rf2  & $89.8 \pm 0.7$  & $90.1 \pm 1.2$  & $90.3 \pm 1.1$  & $89.9 \pm 1.0$ \\ \hline
scm1d & $90.9 \pm 0.8$  & $90.3 \pm 1.0$  & $90.3 \pm 1.0$  & $91.0 \pm 1.2$ \\ \hline
scm20d & $90.4 \pm 0.8$  & $90.2 \pm 0.7$  & $90.2 \pm 0.7$  & $90.3 \pm 0.8$ \\ \hline
Taxi & $90.0 \pm 0.3$  & $90.0 \pm 0.3$  & $89.9 \pm 0.4$  & $90.0 \pm 0.3$ \\ \hline
\bottomrule
\end{tabular}
\caption{Table for Coverage in \% (desired coverage 0.9).}
\label{tab:coverage:real:0.9}
\end{table}

\begin{table}[t]
\centering
\begin{tabular}{@{}lcccc@{}}
\toprule
Dataset & Naïve QR & Emp.\ Cov.\ & Loc.\ Emp.\ Cov.\ & MVCS \\
\midrule
Bias correction  & $2.25 \pm 0.08$  & $2.38 \pm 0.25$  & $3.13 \pm 0.38$  & $\textbf{2.21} \pm 0.36$ \\ \hline
CASP  & $3.41 \pm 0.05$  & $3.73 \pm 0.21$  & $3.79 \pm 0.36$  & $\textbf{2.94} \pm 0.11$ \\ \hline
Energy  & $3.45 \pm 1.14$  & $4.19 \pm 1.29$  & $4.19 \pm 1.29$  & $\textbf{2.85} \pm 1.51$ \\ \hline
House  & $2.67 \pm 0.09$  & $3.03 \pm 0.14$  & $3.25 \pm 0.19$  & $\textbf{2.29} \pm 0.07$ \\ \hline
rf1  & $1.17 \pm 0.13$  & $1.49 \pm 0.38$  & $1.55 \pm 0.24$  & $\textbf{0.97} \pm 0.16$ \\ \hline
rf2  & $1.67 \pm 0.31$  & $1.59 \pm 0.46$  & $1.30 \pm 0.24$  & $\textbf{0.93} \pm 0.30$ \\ \hline
scm1d  & $5.06 \pm 0.12$  & $\textbf{4.43} \pm 0.60$  & $\textbf{4.43} \pm 0.60$  & $4.54 \pm 0.78$ \\ \hline
scm20d  & $6.56 \pm 0.40$  & $4.63 \pm 0.26$  & $4.63 \pm 0.26$  & $\textbf{4.25} \pm 0.27$ \\ \hline
Taxi  & $5.33 \pm 0.09$  & $6.19 \pm 0.10$  & $5.61 \pm 0.07$  & $\textbf{5.15} \pm 0.06$ \\ \hline
\bottomrule
\end{tabular}
\caption{Table for Normalized Volume (desired coverage 0.99).}
\label{tab:volume:real:0.99}
\end{table}

\begin{table}[t]
\centering
\begin{tabular}{@{}lcccc@{}}
\toprule
Dataset & Naïve QR & Emp.\ Cov.\ & Loc.\ Emp.\ Cov.\ & MVCS \\
\midrule
Bias correction   & $99.1 \pm 0.2$  & $99.3 \pm 0.2$  & $99.1 \pm 0.4$  & $99.4 \pm 0.4$ \\ \hline
CASP   & $99.2 \pm 0.1$  & $99.0 \pm 0.1$  & $99.0 \pm 0.1$  & $99.1 \pm 0.1$ \\ \hline
Energy   & $99.8 \pm 0.4$  & $99.8 \pm 0.4$  & $99.8 \pm 0.4$  & $99.4 \pm 0.8$ \\ \hline
House   & $99.1 \pm 0.2$  & $99.0 \pm 0.3$  & $99.1 \pm 0.2$  & $99.0 \pm 0.2$ \\ \hline
rf1   & $99.0 \pm 0.3$  & $99.0 \pm 0.3$  & $99.0 \pm 0.5$  & $99.0 \pm 0.3$ \\ \hline
rf2   & $99.0 \pm 0.4$  & $99.0 \pm 0.5$  & $99.1 \pm 0.1$  & $99.2 \pm 0.3$ \\ \hline
scm1d  & $99.2 \pm 0.3$  & $99.3 \pm 0.3$  & $99.3 \pm 0.3$  & $99.3 \pm 0.3$ \\ \hline
scm20d  & $99.1 \pm 0.2$  & $99.3 \pm 0.2$  & $99.3 \pm 0.2$  & $99.1 \pm 0.3$ \\ \hline
Taxi  & $98.9 \pm 0.1$  & $98.9 \pm 0.1$  & $98.9 \pm 0.1$  & $98.9 \pm 0.1$ \\ \hline
\bottomrule
\end{tabular}
\caption{Table for Coverage in \% (desired coverage 0.99).}
\label{tab:coverage:real:0.99}
\end{table}

\section{Discussion}

While our approach provides a principled framework for learning minimum-volume prediction sets with finite-sample validity, several challenges remain. First, our reliance on first-order optimization methods does not guarantee avoidance of poor local minima, particularly given the inherent nonconvexity of our loss function. While our empirical results demonstrate stable performance, exploring alternative optimization strategies—such as second-order methods or tailored regularization techniques—could enhance robustness to nonconvexity.

Second, hyperparameter selection remains a critical factor. The learning rate of the matrix model, for instance, strongly influences convergence behavior, and we observed that optimal settings vary across datasets. Developing more adaptive or automated tuning strategies could improve generalization and reduce the need for manual adjustment.

Additionally, while our framework effectively minimizes volume, aggressive volume reduction may come at the expense of conditional coverage. Although our empirical results indicate strong generalization, the formulation does not explicitly regulate coverage at the level of individual feature values. Ensuring robust performance in regions with limited training data remains an open challenge, particularly in highly heteroskedastic settings. Future work could explore strategies to balance volume minimization with improved conditional reliability.

Finally, our model is designed for a fixed confidence level $\alpha$, similar to quantile regression approaches. However, one advantage of our framework is that it allows for post-hoc adjustments, enabling adaptation to different coverage levels after training. Investigating how to efficiently recalibrate prediction sets for varying confidence levels could further enhance flexibility.

Despite these challenges, our approach demonstrates strong empirical performance across both synthetic and real datasets, offering a scalable and adaptive solution to multivariate uncertainty quantification. Future research could build upon this foundation to improve theoretical guarantees, explore alternative optimization techniques, and extend the method to further application areas.

\section*{Acknowledgements}

Liviu Aolaritei acknowledges support from the Swiss National Science Foundation through the Postdoc.Mobility Fellowship (grant agreement P500PT\_222215). The remaining authors acknowledge funding from the European Union (ERC-2022-SYG-OCEAN-101071601). Views and opinions expressed are however those of the author(s) only and do not necessarily reflect those of the European Union or the European Research Council Executive Agency. Neither the European Union nor the granting authority can be held responsible for them. This publication is part of the Chair «Markets and Learning», supported by Air Liquide, BNP PARIBAS ASSET MANAGEMENT Europe, EDF, Orange and SNCF, sponsors of the Inria Foundation. This work has also received support from the French government, managed by the National Research Agency, under the France 2030 program with the reference «PR[AI]RIE-PSAI» (ANR-23-IACL-0008).

Finally, the authors would like to thank Eugène Berta, David Holzmüller, Etienne Gauthier, and Reese Pathak for their valuable feedback on the paper.

\bibliographystyle{abbrvnat} 
\bibliography{bibfile.bib}

\section*{Appendix}
\renewcommand{\thesubsection}{\Alph{subsection}}
\label{sec:appendix}

\subsection{Proof of convexity in the DC formulation}
\label{app:convexity:DC}

The DC formulation leads to the objective :
\begin{align*}
    \underbrace{- \log \det \Lambda + r \overline{\sigma}_r\left\{ \|\Lambda y_i + \eta \| \right\}}_{f(\Lambda,\eta)} 
    - \underbrace{(r-1) \overline{\sigma}_{r-1}\left\{ \|\Lambda y_i + \eta \| \right\}}_{g(\Lambda,\eta)}.
\end{align*}
To establish the convexity of $f(\Lambda, \mu)$, we analyze its two components. The first term, $-\log \det \Lambda$, is a standard convex function over the space of positive semidefinite matrices. The second term, $r \overline{\sigma}_r \{ \|\Lambda y_i + \eta\| \}$, can be shown to be convex by expressing it as the following optimization problem
\begin{align}
\label{eq:app:sum:largest:r}
    \overline{\sigma}_r\{\|\Lambda y_i + \eta\|\} = \frac{1}{r} \sum_{j=1}^r \sigma_j \{\|\Lambda y_i + \eta\|\} = \min_{\nu \in \mathbb{R}} \frac{1}{r} \sum_{i=1}^n \max\{ \|\Lambda y_i + \eta\| - \nu, 0\} + \nu,
\end{align}
where the equality follows from the first-order optimality conditions of the right-hand side. Since the norm $\|\Lambda y_i + \eta\|$ is jointly convex in $(\Lambda, \eta)$, the optimization problem on the right-hand side is jointly convex in $(\Lambda, \eta, \nu)$. Moreover, as partial minimization of a jointly convex function preserves convexity, it follows that $\overline{\sigma}_r \{ \|\Lambda y_i + \eta\| \}$ is convex. The same reasoning applies to $g(\Lambda, \eta)$, ensuring that both functions maintain convexity within the DC decomposition.

\subsection{Synthetic dataset generation}
\label{app:synthetic:datageneration}
We generate data according to the model $Y = f(X) + t(X)B$, where $X \sim \mathcal{N}(0, I_d)$ and $Y \in \mathbb{R}^k$. The noise term $B$ follows a fixed distribution and is modulated by the transformation $t(X)$. The transformation function $t(X)$ is defined as \begin{equation} t(X) = r(X) \exp \left( \sum_{i=1}^{K} w_i(X) \log R_i \right), \end{equation} where $r(X)$ scales the transformation, and the summation interpolates between $K$ randomly generated fixed rotation matrices ${R_i}$ in the Lie algebra via the log-Euclidean framework, ensuring valid rotations.

The interpolation weights $w_i(X)$ are determined based on the proximity of $X$ to $K$ anchor points in $\mathbb{R}^d$, following an inverse squared distance scheme: 
\begin{equation*} 
    \tilde{w}_i(X) = \frac{1}{\|X - X_i\|^4}, \quad w_i(X) = \frac{\tilde{w}_i(X)}{\sum_{j} \tilde{w}_j(X)}. 
\end{equation*} 
This formulation allows the covariance of $Y | X$ to vary smoothly with $X$, generating complex, structured heteroskedastic noise patterns.

The radius function is set to $r(X) = \|X\|/2 + X^\top v + 0.15$, and the function $f(X)$ is defined as \begin{equation*} f(X)= 2 (\sin (X^\top \beta) + \tanh((X \odot X)^\top \beta) + X^\top J_2), \end{equation*} where $v$ and $\beta$ are random fixed parameters in $\mathbb{R}^{d\times k}$, and $J_2$ is a coefficient matrix with all entries set to zero except for $(1,1)$ and $(2,2)$, which are set to $1$.

We generate a dataset of $30,000$ samples from this distribution, which is then split into training, validation, calibration, and test sets.

\subsection{Additional results for the synthetic datasets} 
\label{app:synthetic:additional:results}
See Tables~\ref{tab:volume:synthetic:0.99} \&~\ref{tab:coverage:synthetic:0.99}.

\subsection{Dataset information} 
\label{app:dataset:information}
See Table~\ref{tab:dataset:information}.

\subsection{Hyper-parameters}
\label{app:hyperparameters}
Each dataset requires its own hyperparameter tuning. In Table~\ref{tab:hyperparameters}, we summarize the hyperparameters used for the House dataset at a coverage level of $0.99$. The primary parameters adjusted were the learning rates and the number of training epochs. Detailed hyperparameter settings for all experiments are available in the associated GitHub repository.

\subsection{Results in one dimension for a coverage of $0.90$} 
\label{app:1D:coverage:09}
See Figure~\ref{fig:conditional:1d:09}.

\bigskip\bigskip

\begin{table}[h]
\centering
\begin{tabular}{@{}lcccc@{}}
\toprule
Dataset & Naïve QR & Emp.\ Cov.\ & Loc.\ Emp.\ Cov.\ & MVCS \\
\midrule
Exp. Fixed  &  $11.98 \pm 0.11$  & $10.05 \pm 0.08$  & $9.90 \pm 0.23$  & $\textbf{9.30} \pm 0.12$ \\ \hline
Exp. Transformed &  $18.46 \pm 0.44$  & $18.52 \pm 0.78$  & $15.23 \pm 0.47$  & $\textbf{14.46} \pm 0.46$ \\ \hline
Gau. Fixed  &  $14.07 \pm 0.70$  & $14.66 \pm 0.43$  & $10.94 \pm 0.20$  & $\textbf{10.89} \pm 0.22$ \\ \hline
Gau. Transformed  &  $9.05 \pm 0.10$  & $7.48 \pm 0.18$  & $7.09 \pm 0.12$  & $\textbf{6.74} \pm 0.13$ \\ \hline
\bottomrule
\end{tabular}
\caption{Table for Normalized Volume (desired coverage 0.99).}
\label{tab:volume:synthetic:0.99}
\end{table}

\begin{table}[h]
\centering
\begin{tabular}{@{}lcccc@{}}
\toprule
Dataset & Naïve QR & Emp.\ Cov.\ & Loc.\ Emp.\ Cov.\ & MVCS \\
\midrule
Exp. Fixed   & $98.9 \pm 0.1$  & $98.8 \pm 0.2$  & $98.8 \pm 0.2$  & $98.8 \pm 0.1$ \\ \hline
Exp. Transformed  & $99.1 \pm 0.2$  & $99.0 \pm 0.3$  & $98.8 \pm 0.2$  & $98.8 \pm 0.2$ \\ \hline
Gau. Fixed   & $99.0 \pm 0.2$  & $98.8 \pm 0.2$  & $99.0 \pm 0.2$  & $98.9 \pm 0.1$ \\ \hline
Gau. Transformed   & $99.0 \pm 0.1$  & $98.9 \pm 0.2$  & $98.9 \pm 0.1$  & $98.9 \pm 0.3$ \\ \hline
\bottomrule
\end{tabular}
\caption{Table for Coverage in \% (desired coverage 0.99).}
\label{tab:coverage:synthetic:0.99}
\end{table}

\begin{table}[h]
\centering
\begin{tabular}{@{}lcccc@{}}
\toprule
Dataset & Number of samples & Dimension of covariate & Dimension of response \\
\midrule
Bias correction & 7752 & 22 & 2 \\ \hline
Casp & 45730 & 8 & 2 \\ \hline
Energy & 768 & 8 & 2 \\ \hline
House & 21613 & 17 & 2 \\ \hline
rf1 & 9125 & 64 & 8 \\ \hline
rf2 & 9125 & 576 & 8 \\ \hline
scm1d & 9803 & 280 & 16 \\ \hline
scm20d & 8966 & 61 & 16 \\ \hline
Taxi & 61286 & 6 & 2 \\ \hline
\bottomrule
\end{tabular}
\caption{Description of the datasets.}
\label{tab:dataset:information}
\end{table}

\begin{table}[t]
\centering
\begin{tabular}{@{}lcccc@{}}
\toprule
Hyperparameter & Value \\
\midrule
Hidden dimension (center) & 256 \\
Number of hidden layers (center) & 3 \\
Activation function (center) & ReLU \\
Hidden dimension (matrix) & 256 \\
Number of hidden layers (matrix) & 3 \\
Activation function (matrix) & ReLU \\
Optimizer & Adam \\
Batch Size (warm start) & 100 \\
Number of epochs (warm start) & 500 \\
Batch size (our loss) & 100 \\
Number of epochs (matrix) & 20 \\
Number of epochs (our loss) & 200 \\
Learning rate (warm start) & 0.0001 \\
Learning rate (model) & 0.0005 \\
Learning rate (matrix) & 0.01 \\
Learning rate (p) & 0.01 \\
Number of neighbors & 1000 \\
Learning scheduler & CosineAnnealingLR \\
Batch size (pinball loss) & 100 \\
Number of epochs (pinball loss) & 200 \\
Learning rate (pinball loss) & 0.0001 \\
\bottomrule
\end{tabular}
\caption{Hyper-parameters on the house dataset - coverage level $0.99$.}
\label{tab:hyperparameters}
\end{table}

\begin{figure}[t]
    \centering
    \includegraphics[width=0.47\linewidth]{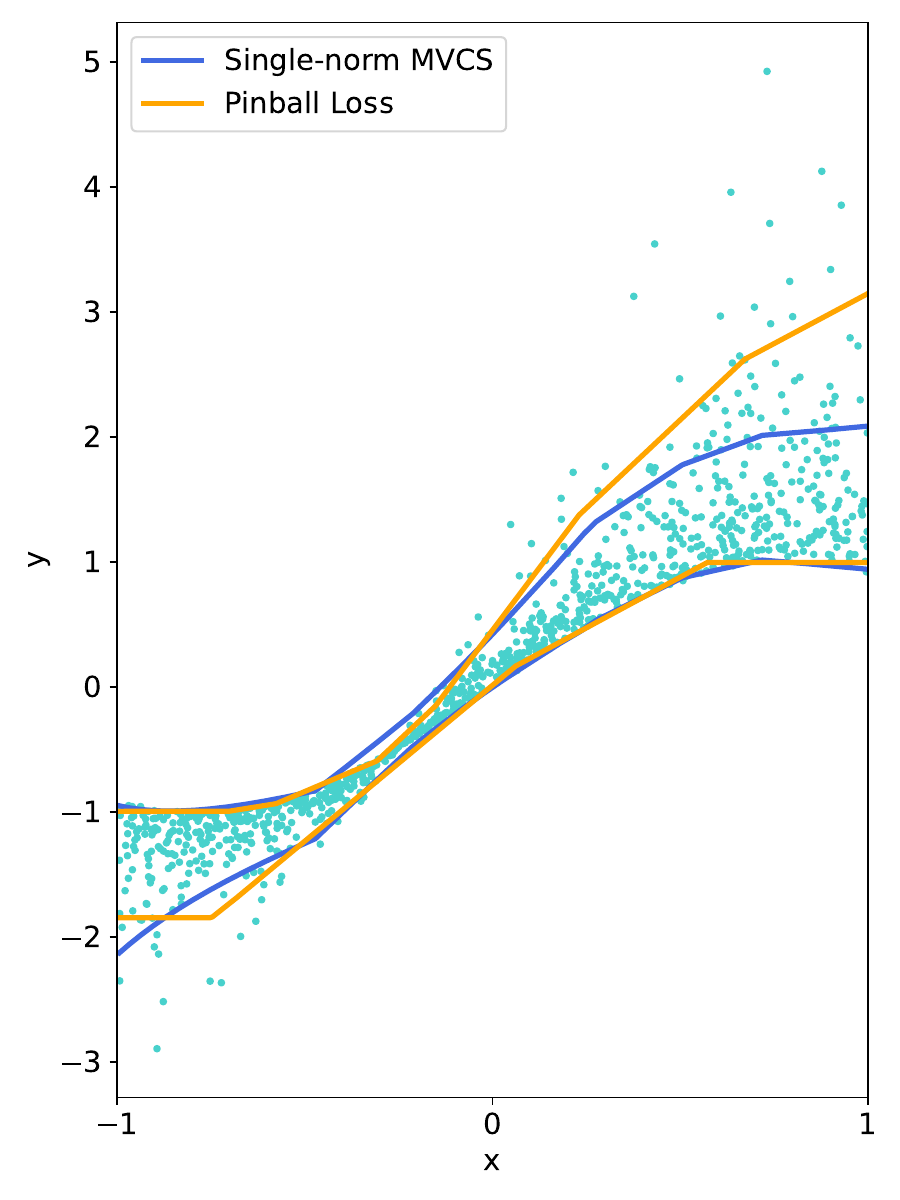}
    \vspace{-5mm}
    \caption{Conformal sets for the desired coverage $0.90$.}
    \label{fig:conditional:1d:09}
\end{figure}

\end{document}